\theoremstyle{plain}
\newtheorem{theorem}{Theorem}[section]
\newtheorem{lemma}[theorem]{Lemma}
\theoremstyle{definition}
\newtheorem{assumption}[theorem]{Assumption}
\theoremstyle{remark}
\definecolor{cvprblue}{rgb}{0.21,0.49,0.74}
\title{NLPrompt: Noise-Label Prompt Learning for Vision-Language Models}  
\author{Bikang Pan$^{1,\dagger}$\quad  Qun Li$^{1,\dagger}$ \quad Xiaoying Tang$^2$\quad Wei Huang$^3$\quad Zhen Fang$^4$\quad Feng Liu$^5$\quad \\Jingya Wang$^1$\quad Jingyi Yu$^1$\quad Ye Shi$^{1,}$\thanks{Corresponding author. $\dagger$ Equal contribution.} \\
$^1$ShanghaiTech University, Shanghai, China\\
$^2$The Chinese University of Hong Kong, Shenzhen, China\\
$^3$RIKEN Center for Advanced Intelligence Project, Japan\\
$^4$University of Technology Sydney, Australia \ \ 
$^5$University of Melbourne, Australia \\
{\tt\small \{panbk2023,liqun2024,wangjingya,yujingyi,shiye\}@shanghaitech.edu.cn,} \\{\tt\small tangxiaoying@cuhk.edu.cn,wei.huang.vr@riken.jp,}\\
{\tt\small zhen.fang@uts.edu.au,feng.liu1@unimelb.edu.au}\\
{\small {\url{https://github.com/qunovo/NLPrompt}}}}
\begin{document}
\maketitle

\begin{abstract}
The emergence of vision-language foundation models, such as CLIP, has revolutionized image-text representation, enabling a broad range of applications via prompt learning. Despite its promise, real-world datasets often contain noisy labels that can degrade prompt learning performance. In this paper, we demonstrate that using mean absolute error (MAE) loss in prompt learning, named PromptMAE, significantly enhances robustness against noisy labels while maintaining high accuracy. Though MAE is straightforward and recognized for its robustness, it is rarely used in noisy-label learning due to its slow convergence and poor performance outside prompt learning scenarios. To elucidate the robustness of PromptMAE, we leverage feature learning theory to show that MAE can suppress the influence of noisy samples, thereby improving the signal-to-noise ratio and enhancing overall robustness. Additionally, we introduce PromptOT, a prompt-based optimal transport data purification method to enhance the robustness further. PromptOT employs text features in vision-language models as prototypes to construct an optimal transportation matrix. This matrix effectively partitions datasets into clean and noisy subsets, allowing for the application of cross-entropy loss to the clean subset and MAE loss to the noisy subset. Our Noise-Label Prompt Learning method, named NLPrompt, offers a simple and efficient approach that leverages the expressive representations and precise alignment capabilities of vision-language models for robust prompt learning. We validate NLPrompt through extensive experiments across various noise settings, demonstrating significant performance improvements. 
\end{abstract} 

\section{Introduction} 
The advent of vision-language foundation models, such as CLIP \cite{CLIP}, has revolutionized how images and their textual descriptions are represented, providing a unified perspective for both modalities. In these models, images are typically aligned with sentences like ``A photo of a $\langle$CLS$\rangle$", thereby facilitating the efficient handling of various tasks. Given the sensitivity of handcrafted text in descriptions, prompt learning has emerged as a crucial method for fine-tuning these vision-language models. Prompt learning involves updating a learnable text prompt through back-propagation \cite{CoOp, zhouConditionalPromptLearning2022, chen2022plot, li2024global, cui2024harmonizing}, offering a lightweight solution due to the relatively small number of parameters involved, often just several thousand. This adaptability ensures rapid tuning for specific tasks. 

Nevertheless, real-world applications often face the challenge of dealing with noisy labels in annotated datasets, necessitating robust learning solutions. Prior work \cite{wuWhyPromptTuning2023} illustrates that prompt tuning is more resilient to noisy labels compared to other fine-tuning paradigms such as adapter tuning. Despite this, prompt tuning remains vulnerable to overfitting when trained with cross-entropy loss under noisy conditions. Therefore, enhancing the robustness of prompt tuning in noisy environments remains a crucial issue. 

In the realm of noisy label learning, mean absolute error (MAE) has been identified as a robust loss function within the traditional training paradigm \cite{ghoshRobustLossFunctions2017}. However, MAE often suffers from slow convergence and poor performance during training, making it seldom employed as a classification loss in noise-label learning. Nevertheless, our investigation reveals an interesting phenomenon: \textit{employing MAE loss in Prompt learning (PromptMAE) notably enhances robustness while maintaining high accuracy compared to traditional cross-entropy loss}. As demonstrated in Figure \ref{fig:cevsmae}, MAE exhibits strong accuracy and fast convergence even in the presence of substantial noise. 

\begin{figure*}[htbp]
\centering
\includegraphics[width=\linewidth]{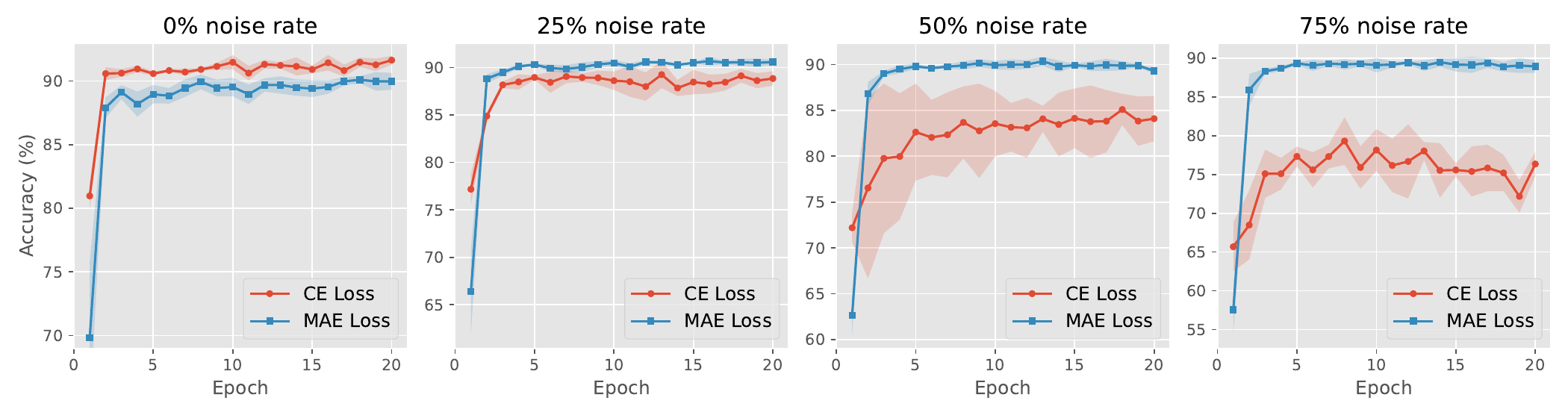}
 \caption{The performance of training with MAE loss and CE loss in prompt learning on Caltech101 dataset.} 
 \label{fig:cevsmae}
\end{figure*}

To elucidate the robustness of PromptMAE, we leverage feature learning theory \cite{allen-zhuUnderstandingEnsembleKnowledge2022, caoBenignOverfittingTwolayer2022, panFederatedLearningVisionLanguage2024}, which categorizes latent representations into task-relevant and task-irrelevant components. By analyzing the optimization dynamics of these features with gradient-descent-based training, we can gain valuable insights into convergence and generalization. To this end, we find that robust prompt learning is achieved when task-relevant features dominate. Our analysis indicates that PromptMAE can suppress the influence of noisy samples, thereby enhancing robustness in prompt learning for vision-language models. 

A standard approach in noisy label learning is the employment of sample selection techniques \cite{chengLearningInstancedependentLabel2020, liDividemixLearningNoisy2020, hanCoteachingRobustTraining2018, patelAdaptiveSampleSelection2023, fengOtfilterOptimalTransport2023} to clean the dataset and thus improve performance under noisy conditions. For example, optimal transport (OT)-based sample selection methods \cite{fengOtfilterOptimalTransport2023, xiaOtCleanerLabel2022} utilize randomly initialized prototypes to compute the optimal transportation matrix from image features to these prototypes, considering the similarity between features and prototypes as a cost matrix. As these methods were not originally designed for prompt learning, their direct applicability may be limited. We aim to harness the inherent alignment in vision-language foundation models to refine the data purification process. 

In this paper, we introduce PromptOT, a prompt-based optimal transport data purification method, designed to enhance the robustness of prompt learning in vision-language foundation models. PromptOT leverages the text features as prototypes for the transportation matrix, facilitating robust prompt learning by partitioning the dataset into clean and noisy subsets. Recognizing that cross-entropy (CE) loss generally outperforms MAE on clean datasets, we apply MAE loss to train the noisy subset and cross-entropy loss to train the clean subset. This dual strategy, supported by PromptOT purification, harmonizes the strengths of both MAE and CE loss under varying noisy conditions. Our comprehensive method, named NLPrompt, leverages the expressive representation and alignment capabilities of vision-language models, offering a simple and efficient solution for robust prompt learning in the presence of noisy labels. In summary, our contributions are threefold: 
\\

%\vspace{-10pt} 
\begin{itemize}
    \item We discover that a simple MAE loss significantly improves the robustness of prompt learning on noisy datasets. Utilizing feature learning theory, we theoretically demonstrate how PromptMAE reduces the impact of noisy samples, enhancing overall robustness. 
    \\
    
    \item We introduce NLPrompt, a robust prompt learning method that uses a simple MAE loss with PromptOT-based data purification to handle noisy labels. NLPrompt efficiently exploits the expressive representation and precise alignment capabilities of vision-language foundation models for robust prompt learning. 
    \\
    
    \item We validate the effectiveness of NLPrompt through extensive experiments across datasets with varied noise conditions, consistently showing significant performance improvements. 
\end{itemize} 

\section{Related Work}
\subsection{Prompt Learning in Vision-Language Models}
Prompt learning, which began in natural language processing, has now extended into the realm of vision-language models. A notable example is the CLIP model \cite{CLIP}, which initially relied on hand-crafted prompts. Recent advancements, however, have shifted focus towards learning prompts in a continuous embedding space. Innovations such as CoOp \cite{CoOp} have enhanced the CLIP model by integrating continuous prompt vectors, fostering a wave of research dedicated to optimizing prompt learning and paving the way for further exploration. In addition to CoOp, CoCoOp \cite{zhouConditionalPromptLearning2022} utilizes a neural network to generate input-specific context tokens that adapt the prompts based on each image, thereby improving generalization to unseen classes. ProGrad \cite{zhuPromptalignedGradientPrompt2023} regularizes the soft prompt updates by aligning their gradients with the general knowledge provided by the original prompt. MaPLe \cite{khattakMapleMultimodalPrompt2023} introduces branch-aware hierarchical prompts that address both language and vision branches. TPT (Test-Time Prompt Tuning) \cite{shuTesttimePromptTuning2022} explores prompt tuning without additional training samples by augmenting the input image into various views and training the learnable prompts to generate consistent responses across these different views. 

\subsection{Learning with Noisy Labels}
Mislabeled data can lead to deep neural networks overfitting to noisy labels. To address the issue of learning from noisy labels, previous researchers have proposed various methods, including robust network architectures \cite{leeRobustInferenceGenerative2019, yaoDeepLearningNoisy2018}, robust regularization techniques \cite{hendrycksUsingPretrainingCan2019, menonCanGradientClipping2020, xiaRobustEarlylearningHindering2020}, robust loss functions \cite{fengCanCrossEntropy2021, ghoshRobustLossFunctions2017, lyuCurriculumLossRobust2019}, correction of loss via estimation matrices \cite{changActiveBiasTraining2017, xiaAreAnchorPoints2019, yaoDualReducingEstimation2020}, and sample selection and meta-learning approaches \cite{liDividemixLearningNoisy2020, patelAdaptiveSampleSelection2023, songSelfieRefurbishingUnclean2019}.

The study of prompt learning with noise labels is currently in its nascent stage. A pioneering work by Wu et al. \cite{wuWhyPromptTuning2023} demonstrated that prompt learning is more robust than other parameter-efficient fine-tuning methods, such as adapters. Subsequently, JoAPR \cite{guoJoAPRCleaningLens2024} uses a Gaussian mixture model with joint adaptive thresholds to differentiate between clean and noisy data. It corrects labels by combining the results of data augmentations with mixup loss, and then retrains the model using this refined data. However, this approach does not fully leverage the benefits of prompt learning. In contrast, our study shows that using a simple MAE loss in prompt learning can boost the capability for handling noisy data. Additionally, we incorporate prompt-based optimal transport to further purify the noise samples. 

\subsection{Feature Learning Theory}
To further understand how noisy label learning affects prompt learning, we leverage feature learning theory to analyze the learning process. Feature learning theory \cite{allen-zhuUnderstandingEnsembleKnowledge2022,caoBenignOverfittingTwolayer2022,huang2023understanding,wen2021toward,huang2023graph,zou2021understanding,huang2024comparison,jiang2024unveil} categorizes latent features into task-relevant and task-irrelevant components, expressing the trainable weights as a combination of these feature types. From this perspective, feature learning analyzes the coefficients of these features to gain insight into learning dynamics. Beyond its application in traditional learning paradigms, prompt learning can also be explained by feature learning theory \cite{panFederatedLearningVisionLanguage2024}. In this paper, we adopt feature learning theory to demonstrate the robustness of MAE in prompt learning.

\section{Preliminary}

\textbf{Notation.} In our work, vectors are represented by lowercase bold letters, matrices by uppercase bold letters, and scalars by regular, non-bold letters. The \(\ell_2\)-norm of a vector \(\mathbf{v}\) is denoted as \(|\mathbf{v}|_2\). For matrices, the spectral norm of \(\mathbf{A}\) is indicated by \(|\mathbf{A}|_2\), and the Frobenius norm by \(|\mathbf{A}|_F\). The indicator function is represented by \(\mathds{1}(\cdot)\). Finally, sequences of integers are represented as \([n] = \{1, 2, \ldots, n\}\), and sequences of elements, such as vectors, are similarly denoted as \(\mathbf{v}_{[n]} = \{\mathbf{v}_1, \mathbf{v}_2, \ldots, \mathbf{v}_n\}\).\\

\noindent \textbf{Prompt Learning.} In this section, we demonstrate how to fine-tune a learnable text prompt within a vision-language pre-trained model. We focus on a classification task, where we have an image \(\mathbf{x}\) that we aim to classify into the correct ground truth class \(y \in [C]\), with \(C\) representing the total number of classes. From the vision-language pre-trained model, we expect the latent spaces of the text encoder and image encoder to be aligned. This alignment ensures that, when different prompts are input, the text feature generated by the correct prompt will have the highest similarity with the image feature. In this setup, we input a learnable prompt \(\mathbf{p} \in \mathbb{R}^d\) along with a fixed class prompt \(\mathbf{p}_c \in \{\mathbf{p}_1, \ldots, \mathbf{p}_C\}\), where each \(\mathbf{p}_c \in \mathbb{R}^d\) represents a specific class, into the text encoder \(h\). Here, \(d\) denotes the dimensionality of the prompts. By incorporating the learnable prompt $\mathbf{p}$, we generate the text feature for class \(c\) as \(\mathbf{h}_{c} = h(\mathbf{p}, \mathbf{p}_c) \in \mathbb{R}^m\). Meanwhile, the image feature \(\mathbf{g}\) is produced by the image encoder \(g\) as \(\mathbf{g} = g(\mathbf{x}) \in \mathbb{R}^m\). We define the similarity function between the image feature \(\mathbf{g}\) and the text feature \(\mathbf{h}_{c}\) as \(\bm{\rho}  = \text{sim}(\mathbf{g}, \mathbf{h}_{c})\in \mathbb{R}^c\). The training process follows the structure of traditional classification tasks, with an objective loss \(\ell(\bm{\rho}, \mathbf{e}_y)\) that measures the distance between the similarity vector \(\bm{\rho}\) and the true label \(y\). Here, \(\ell\) represents the loss function quantifying the distance between two vectors, and \(\mathbf{e}_y\) is the one-hot vector associated with the ground truth label \(y\).\\

\noindent \textbf{Optimal Transport.} Optimal transport (OT) is a constrained optimization problem that seeks to determine the optimal coupling matrix that maps one probability distribution to another while minimizing the total cost. Given the marginal distributions $\bm\alpha \in \mathbb{R}^n$, $\bm\beta \in \mathbb{R}^m$, and the cost matrix $\mathbf{C} \in \mathbb{R}^{n \times m}$, the classical OT problem is formulated as follows:
\begin{equation}
    \begin{aligned}
        \min\limits_{\mathbf{Q} \in \mathbb{R}_+^{n \times m}} \qquad &\langle \mathbf{C}, \mathbf{Q}\rangle\\
    \text{s.t.} \qquad & \mathbf{Q}\mathds{1}_{m} = \bm\alpha, \ \mathbf{Q}^\top\mathds{1}_n = \bm\beta.
    \end{aligned}
\end{equation}
This problem is a linear programming task, which becomes computationally expensive as the problem scale increases. To address this, Sinkhorn \cite{cuturiSinkhornDistancesLightspeed2013} proposed adding an entropic regularization, which allows for a closed-form solution and provides a "lightspeed" algorithm that only requires iterative scaling of the transportation matrix. The entropic regularized formulation is given as follows:
\begin{align}
    \min\limits_{\mathbf{Q} \in \mathbb{R}_+^{n \times m}} \qquad &\langle \mathbf{C}, \mathbf{Q}\rangle - \epsilon H(\mathbf{Q})\label{formu: EOT}\\
    \nonumber\text{s.t.} \qquad & \mathbf{Q}\mathds{1}_{m} = \bm\alpha, \ \mathbf{Q}^\top\mathds{1}_n = \bm\beta,
\end{align} 
where $H(\mathbf{Q}) = \sum_{i,j} Q_{ij} \left( \log Q_{ij} - 1 \right)$ and $\epsilon \geq 0$ is the coefficient that controls the regularization term.
In previous work, OT has been formulated as a pseudo-labeling technique for a range of machine learning tasks, including class-imbalanced learning \cite{guoLearningReweightExamples2022, wangSolarSinkhornLabel2022}, semi-supervised learning \cite{nguyenConfidentSinkhornAllocation2022, laiSarSelfadaptiveRefinement2022}, clustering \cite{asanoSelflabellingSimultaneousClustering2019, caronUnsupervisedLearningVisual2020, finiUnifiedObjectiveNovel2021}, domain adaptation \cite{zhengGroupawareLabelTransfer2021, changUnifiedOptimalTransport2022}, label refinery \cite{xiaOtCleanerLabel2022, wangSolarSinkhornLabel2022, fengOtfilterOptimalTransport2023,chang2023csot}, and others. Unlike prediction-based pseudo-labeling \cite{sohnFixmatchSimplifyingSemisupervised2020}, OT-based pseudo-labeling optimizes the mapping samples to class centroids, while considering the global structure of the sample distribution in terms of marginal constraints instead of per-sample predictions.

\section{Theoretical Analysis for the Robustness of PromptMAE}

% \subsection{MAE Loss is Robust for Prompt Learning}
In prompt learning with noisy labels, the ground truth class $y$ is flipped to a different noisy label class with a certain probability. As shown in Figure \ref{fig:cevsmae}, we compared the performance of the original CoOp using cross-entropy (CE) loss and mean absolute error (MAE) loss.  We observed that as the noise level increased in the datasets, the performance using CE loss significantly dropped, while the mean absolute error loss showed negligible change.
\vspace{2pt}
\\

\noindent \textbf{Basic Settings.} To explain this phenomenon, we apply feature learning theory to characterize the mechanism behind the robustness of MAE in the context of prompt learning. In our analysis, the objective is to classify the image $\mathbf{x}$ into its true class label $y$. In this theoretical analysis, we focus on a binary classification scenario where the class label $y_i \in \{+1, -1\}$. We also assume that the latent spaces of both the text encoder and the image encoder in the vision-language pre-trained model are well aligned. The latent feature space is assumed to consist of both task-relevant features, denoted as $\bm{\mu} \in \mathbb{R}^m$, and task-irrelevant features $\bm{\xi}_1, \dots, \bm{\xi}_L \in \mathbb{R}^m$, where $L$ represents the number of task-irrelevant features and the dimension of the latent space is $m$. For simplicity, we assume these features are orthogonal to each other. 
\\

\noindent\textbf{Text Encoder.}\quad Adopting a setup similar to \cite{wenUnderstandingFeatureLearning2021b}, a learnable prompt $\mathbf{p}$ and a fixed class prompt $\mathbf{p}_c$ are fed into the text encoder $h$:
\begin{equation}
\label{formu:def_of_text_encoder}
\begin{aligned}
    \mathbf{h}_{c} &= h(\mathbf{p}, \mathbf{p}_c)\\& = \sigma(\mathbf{W}\mathbf{p} + \mathbf{W}\mathbf{p}_c) - \sigma(-\mathbf{W}\mathbf{p} + \mathbf{W}\mathbf{p}_c),
\end{aligned}
\end{equation}
where $\mathbf{W} \in \mathbb{R}^{m \times d}$ is the weight matrix, and $\mathbf{p}_c \in \mathbb{R}^{d}$ is the prompt associated with class $c$. To examine the properties of the text encoder in the pre-trained model, we follow  \cite{huangWhatMakesMultimodal2021a,panFederatedLearningVisionLanguage2024} and set the weight matrix \(\mathbf{W}\) to be: 
\begin{align}
\label{formu: row of weight}
    \mathbf{W} &= \begin{bmatrix}
        \bm{\mu}, \bm{\xi}_1, \cdots, \bm{\xi}_L
    \end{bmatrix}^\top.
\end{align} 
\noindent\textbf{Image Encoder.} Let us consider the image network, represented as \(\mathbf{g}_i = g(\mathbf{x}_i) \in \mathbb{R}^m\). Due to that we assume the image encoder $g$ aligns the feature space of the text encoder $h$. As a result, the image feature generated by data \(\mathbf{x}_i\) in client \(i\) can be expressed as:
\begin{equation}
\label{formu: image_encoder}
    \mathbf{g}_i = g(\mathbf{x}_i) = [y_i, x_{i, 1}, \cdots, x_{i, L}]^\top,
\end{equation}
where \(x_{i, l} \sim \mathcal{N}(0, \sigma_p^2), \forall l \in [L]\) represents the coefficient of task-irrelevant terms in the data, and \(\sigma_p^2\) is the variance. The similarity score between an image \(\mathbf{x}_i\) and class \(y_i\) is given by \(\text{sim}(\mathbf{g}_i, \mathbf{h}_i) = \langle \mathbf{g}_i, \mathbf{h}_i \rangle\). To compute the probability, we first pass the logits of the similarity vector through the softmax function: 
\begin{align}
    \label{formu: similarity}
    s_i (\mathbf{p}) = \text{SOFTMAX}(\text{sim}(\mathbf{g}_i, \mathbf{h})).
\end{align}
The CE loss and MAE loss are defined as:
\begin{align}
    \ell_\text{CE}(\mathbf{s}_i, \mathbf{y}_i) = \sum\limits_{c=1}^C -y_{i, c} \log s_{i, c}, \\
    \ell_\text{MAE}(\mathbf{s}_i, \mathbf{y}_i) = \sum\limits_{c=1}^C | y_{i, c} - s_{i, c}|.
\end{align}

\begin{figure*}[htbp]
\centering
\includegraphics[width=\linewidth]{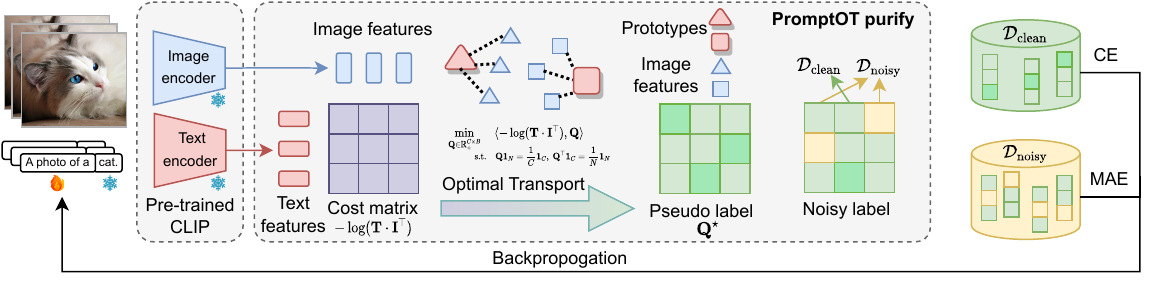}
 \caption{The framework of our NLPrompt. We utilize the text representation to initialize prompt-based OT, which separates the dataset into clean and noisy subsets. NLPrompt harmonizes the advantage of MAE loss and CE loss. The former is more robust on the noisy dataset while the latter performs better on the clean dataset.}
 \label{fig:alpha_user}
\end{figure*}

\noindent\textbf{Noisy Label Modeling.} Here, we introduce a model for label noise. We assume that the label noise follows a Rademacher random variable. Specifically, the noisy label $\tilde{y}$ is generated from the ground truth label $y$ with probability $p \leq 1/2$. That is, $\mathds{P}[\tilde{y} = -y] = p$ and $\mathds{P}[\tilde{y} = y] = 1 - p$. For the purpose of analysis, we divide the entire dataset into two subsets: the clean dataset $S_+ = \{ i \mid \tilde{y}_i = y_i \}$ and the noisy dataset $S_- = \{ i \mid \tilde{y}_i = -y_i \}$.
\\

\noindent\textbf{Feature Representation.} Under feature learning theory, the weight of the prompt can be decomposed into a combination of task-relevant features and task-irrelevant features. We present the following feature representation lemma \cite{panFederatedLearningVisionLanguage2024}:  

\begin{lemma} \label{main_lemma} 
    At the \(t\)-th iteration, the learnable prompt \(\mathbf{p}^{(t)}\) can be rewritten as a linear combination of the features and the prompt initialization:
    \begin{align}
        \nonumber \mathbf{p}^{(t)} &= \alpha^{(t)} \mathbf{p}^{(0)} + \beta^{(t)} ||\bm{\mu}||_2^{-2} \bm{\mu} + \sum\limits_{l = 1}^L \phi_{l}^{(t)} ||\bm{\xi}_{l}||_2^{-2} \bm{\xi}_{l},
    \end{align}
    where \(\alpha^{(t)}\) are the coefficients of the initialization, \(\beta^{(t)}\) and \(\phi_{l}^{(t)}\) are the coefficient of the task-relevant features and task-irrelevant features, respectively.
\end{lemma}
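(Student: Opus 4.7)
The plan is to establish the decomposition by induction on the iteration count $t$. The base case is immediate: at initialization, set $\alpha^{(0)} = 1$ and $\beta^{(0)} = \phi_l^{(0)} = 0$ for all $l$, so the claimed right-hand side collapses to $\mathbf{p}^{(0)}$.

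For the inductive step, assume the representation holds at step $t$ and consider the gradient-descent update $\mathbf{p}^{(t+1)} = \mathbf{p}^{(t)} - \eta\, \nabla_{\mathbf{p}} \mathcal{L}(\mathbf{p}^{(t)})$. The central observation is that the loss depends on the learnable prompt exclusively through the text-encoder output defined in Eq.~\eqref{formu:def_of_text_encoder}, so the chain rule through $\mathbf{h}_c$ and the softmax $s_i(\mathbf{p})$ yields
\begin{equation*}
\nabla_{\mathbf{p}} \mathcal{L}(\mathbf{p}^{(t)}) \;=\; \mathbf{W}^{\top}\, \mathbf{r}^{(t)},
\end{equation*}
where $\mathbf{r}^{(t)}$ bundles the activation derivatives $\sigma'(\pm\mathbf{W}\mathbf{p}^{(t)} + \mathbf{W}\mathbf{p}_c)$, the softmax Jacobian, and the per-sample derivatives $\partial \ell / \partial s_{i,c}$ (which, via the similarity in Eq.~\eqref{formu: similarity}, are scalar multiples weighted by the image feature $\mathbf{g}_i$) summed over $i$ and $c$. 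Since the rows of $\mathbf{W}$ as specified in Eq.~\eqref{formu: row of weight} are exactly $\bm{\mu}^{\top}, \bm{\xi}_1^{\top}, \ldots, \bm{\xi}_L^{\top}$, the product $\mathbf{W}^{\top}\mathbf{r}^{(t)}$ is automatically a linear combination of $\bm{\mu}$ and $\{\bm{\xi}_l\}_{l=1}^L$, with no residual outside the feature span.

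Exploiting the assumed mutual orthogonality of $\bm{\mu}$ and $\{\bm{\xi}_l\}$, I can read off the coefficients via orthogonal projection: define $a^{(t)} = \langle \bm{\mu},\, \nabla_{\mathbf{p}}\mathcal{L}(\mathbf{p}^{(t)})\rangle$ and $b_l^{(t)} = \langle \bm{\xi}_l,\, \nabla_{\mathbf{p}}\mathcal{L}(\mathbf{p}^{(t)})\rangle$, so that
\begin{equation*}
\nabla_{\mathbf{p}} \mathcal{L}(\mathbf{p}^{(t)}) \;=\; a^{(t)}\,\|\bm{\mu}\|_2^{-2}\, \bm{\mu} \;+\; \sum_{l=1}^{L} b_l^{(t)}\, \|\bm{\xi}_l\|_2^{-2}\, \bm{\xi}_l.
\end{equation*}
Substituting this into the descent update and matching against the inductive hypothesis gives the recursions $\alpha^{(t+1)} = \alpha^{(t)}$, $\beta^{(t+1)} = \beta^{(t)} - \eta\, a^{(t)}$, and $\phi_l^{(t+1)} = \phi_l^{(t)} - \eta\, b_l^{(t)}$, closing the induction.

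The main obstacle I anticipate is careful bookkeeping in the chain-rule step: verifying that every factor introduced by the softmax, the activation $\sigma$, and the inner products $\langle \mathbf{g}_i, \mathbf{h}_c\rangle$ arising from Eq.~\eqref{formu: image_encoder} appears only as a scalar multiplier against $\mathbf{W}^{\top}$, so that no component orthogonal to $\mathrm{span}(\bm{\mu}, \bm{\xi}_l)$ is generated. A secondary subtlety is that, for the coefficients in the decomposition to be well-defined, $\mathbf{p}^{(0)}$ must be linearly independent of $\{\bm{\mu}, \bm{\xi}_1, \ldots, \bm{\xi}_L\}$; this is generic under the standard Gaussian initialization inherited from the cited feature-learning framework and need not be proved here.
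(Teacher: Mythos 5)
Your proposal is correct and follows essentially the same route as the paper: the paper likewise computes the gradient via the chain rule, obtains $\nabla_{\mathbf{p}}L_{\mathcal{T}}(\mathbf{p}) = -\frac{1}{n}\sum_{i}\sum_{r}\ell'_i x_{r,i}\sigma'_{r,i}\mathbf{w}_r$, and observes that since the rows $\mathbf{w}_r$ of $\mathbf{W}$ are exactly $\bm{\mu},\bm{\xi}_1,\dots,\bm{\xi}_L$, each gradient step only adds vectors from the feature span, so induction gives the stated decomposition with coefficients read off by orthogonality. Your explicit recursions for $\alpha^{(t)},\beta^{(t)},\phi_l^{(t)}$ match the paper's update equations for $\beta$ and $\phi_l$, and your remark on well-definedness of the decomposition is a reasonable (if unstated in the paper) housekeeping point.
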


\noindent Since the learnable prompt can be expressed as a linear combination of the features, we can analyze the dynamics of these coefficients to understand the learning progress of the prompts. The normalization factor, such as \(||\boldsymbol{\mu}||_2^{-2}\), ensures that the coefficients are comparable to the inner product of the prompt and the features, i.e., \(\beta^{(t)} \approx \langle \mathbf{p}^{(t)}, \bm{\mu} \rangle\). \\

\noindent\textbf{Robustness of PromptMAE.} Building on the previous setup, we now examine how label flipping noise affects the learning dynamics of the coefficients. We will show that, the PromptMAE loss can enhance task-relevant coefficients for clean samples and help mitigate the degradation of task-relevant coefficients for noisy samples. 

By analyzing the dynamics of these coefficients, we gain deeper insights into the learning process. As shown in \cite{panFederatedLearningVisionLanguage2024}, the performance of prompt fine-tuning can be evaluated based on the ratio between task-relevant and task-irrelevant coefficients. To illustrate that MAE produces more robust results, we present the following theorem:

\begin{theorem}\label{T4.2}
    With high probability at test $1-d^{-1}$, the test loss $\ell_{\mathcal{D}}$ for the prompt trained by MAE is lower than the prompt trained by CE, i.e., $\ell_{\mathcal{D}}(\mathbf{p}_\text{MAE}) \leq \ell_{\mathcal{D}} (\mathbf{p}_\text{CE})$. 
    Here, $\mathbf{p}_\text{MAE}$ and $\mathbf{p}_\text{CE}$ refer to the text prompt trained using MAE loss and CE loss, respectively. 
\end{theorem}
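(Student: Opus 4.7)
The plan is to track the learning dynamics of the task-relevant coefficient $\beta^{(t)}$ and the task-irrelevant coefficients $\phi_l^{(t)}$ from Lemma \ref{main_lemma} separately for the two losses, and then show that the signal-to-noise ratio $\beta^{(T)} / \max_l |\phi_l^{(T)}|$ is provably larger under MAE than under CE, which in turn forces a smaller test loss on a high-probability event of measure at least $1 - d^{-1}$.

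First I would compute the gradients $\nabla_{\mathbf{p}} \ell_{\text{CE}}$ and $\nabla_{\mathbf{p}} \ell_{\text{MAE}}$ in closed form, using the inner-product similarity, the softmax in (\ref{formu: similarity}), and the text encoder (\ref{formu:def_of_text_encoder}). Projecting these gradients onto $\bm{\mu}$ and onto each $\bm{\xi}_l$ (orthogonality makes the projections clean) converts one gradient step into scalar recurrences for $\beta^{(t+1)} - \beta^{(t)}$ and $\phi_l^{(t+1)} - \phi_l^{(t)}$. The key structural observation is that a clean sample $i \in S_+$ contributes a positive increment to $\beta$ while a noisy sample $i \in S_-$ contributes a negative one, whereas both types of samples inject Gaussian-scaled terms of magnitude $\sigma_p$ into each $\phi_l$ through the coordinate $x_{i,l}$. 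The CE per-sample factor $(s_{i,c} - \mathds{1}[c = \tilde{y}_i])$ has magnitude approaching $1$ precisely when the model becomes confident on the wrong class, so noisy samples produce increasingly destructive updates that drag $\beta$ downward; the MAE factor multiplies the same directional term by the softmax derivative $s_{i,c}(1 - s_{i,c})$, which is bounded by $1/4$ and, crucially, \emph{vanishes} as the softmax saturates. Summing the per-iteration contributions yields $\beta^{(T)}$ larger under MAE than under CE at matched initialisation and learning rate, while a sub-Gaussian concentration argument on the accumulated $\sum_{t,i} x_{i,l}$ bounds $\max_l |\phi_l^{(T)}|$ uniformly by $O(\sigma_p \sqrt{n T \log d})$ with probability at least $1 - d^{-1}$, showing that the two losses accumulate task-irrelevant noise of comparable order.

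To translate the improved SNR into a test-loss inequality, note that a fresh test sample has the form $\mathbf{g}^{\text{test}} = [y^{\text{test}}, x_1^{\text{test}}, \ldots, x_L^{\text{test}}]^\top$, so the similarity at the correct class decomposes into a signal term proportional to $\beta^{(T)}$ and a noise term $\sum_l \phi_l^{(T)} x_l^{\text{test}}$, whose typical magnitude is again controlled by Gaussian concentration. Both $\ell_{\text{CE}}$ and $\ell_{\text{MAE}}$ of the resulting softmax vector are monotonically decreasing in this SNR, so the larger MAE ratio forces $\ell_{\mathcal{D}}(\mathbf{p}_\text{MAE}) \leq \ell_{\mathcal{D}}(\mathbf{p}_\text{CE})$ on the high-probability event.

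The main obstacle will be the coupled nonlinearity introduced by the softmax: $\beta^{(t)}$ and all $\phi_l^{(t)}$ enter each $s_{i,c}^{(t)}$ simultaneously, so the per-iteration comparison between the two losses cannot be made in isolation. I expect to resolve this by a two-phase induction: in an initial ``alignment'' phase both losses increase $\beta$ faster than $|\phi_l|$ because clean samples dominate in expectation (using $p \leq 1/2$), driving the softmax into a near-saturated regime for both losses; in the subsequent ``consolidation'' phase the MAE softmax-derivative factor suppresses further noisy-sample damage while the CE factor amplifies it, producing the strict gap in $\beta^{(T)}$. Propagating these two phases with careful concentration arguments should then yield the desired inequality.
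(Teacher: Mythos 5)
Your proposal follows essentially the same route as the paper: decompose the prompt via Lemma \ref{main_lemma}, project the gradient updates onto $\bm{\mu}$ and the $\bm{\xi}_l$, observe that the CE gradient coefficient $1-s_{i,\tilde{y}_i}$ saturates to $1$ on noisy samples while the MAE coefficient $2s_{i,\tilde{y}_i}(1-s_{i,\tilde{y}_i})$ vanishes there, and conclude via an induction that the signal-to-noise ratio $\beta^{(t)}/\phi^{(t)}$ is larger under MAE, which forces a smaller test loss by monotonicity. The only notable difference is that the paper compares the \emph{ratios} of the per-step increments ($\Delta\beta_{\text{MAE}}/\Delta\beta_{\text{CE}} > \tfrac{1}{2\mathds{E}[s_y]} > \Delta\phi_{\text{MAE}}/\Delta\phi_{\text{CE}}$) rather than asserting, as you do, that $\beta^{(T)}$ is larger under MAE in absolute terms with $\phi$ of comparable order — the ratio framing is safer since MAE's updates are uniformly damped, but both versions deliver the same SNR conclusion.
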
 

\noindent The proof is provided in the Appendix. From this result, we observe that under a noisy dataset, the MAE loss demonstrates greater robustness. 

\section{Methodology}

In this section, we introduce our NLPrompt algorithm and explain how we utilize the OT problem for data purification. NLPrompt harmonizes the advantage of MAE loss and CE loss. Theorem \ref{T4.2} has shown that MAE loss is more robust on the noisy dataset. 
\\

\noindent\textbf{PromptOT Purification.} 
Here, we utilize OT to generate pseudo-labels for data purification. Traditionally, the OT-based pseudo-labeling method starts with the random initialization of prototypes, and pseudo-labels are then derived from the similarity between images and these prototypes. However, in the context of prompt learning with vision-language foundation models, where the latent space is aligned, the randomly initialized prototypes can be replaced with text features generated by prompts via text encoders. The semantic information embedded in these text features provides a strong foundation for initialization.

Specifically, as outlined in Equation (\ref{formu: EOT}), the OT problem involves solving for a transportation matrix based on a given cost matrix, while preserving the marginal distributions. The similarity between the prototypes and the image features is calculated, and the negative logarithm of the resulting similarity matrix is used as the cost matrix. Due to the marginal distribution constraints, the columns of the OT matrix are normalized, and this matrix is then used as pseudo-labels for the images.

The calculation process in NLPrompt is outlined below. For images in dataset $\{\mathbf{x}_i\}_{i=1}^N$, we first use the pre-trained image encoder of CLIP to generate an image feature matrix $\mathbf{I} \in \mathbb{R}^{N \times d}$, where $d$ represents the dimension of the latent space. Additionally, given the set of classes, we generate prompts corresponding to these classes and pass them to the pre-trained text encoder of CLIP to create a text feature matrix $\mathbf{T} \in \mathbb{R}^{C \times d}$, where $C$ is the number of classes. Next, we calculate the similarity matrix  $\mathbf{T}\cdot\mathbf{I}^\top$. The negative logarithm of this similarity matrix is then used as the cost matrix in the OT problem, with uniform marginal distributions across both the samples and the classes. The OT problem to be solved is given as follows:
\begin{align}
    \label{prob: OT}
    \min\limits_{\mathbf{Q} \in \mathbb{R}_+^{C \times N}} \quad &\langle -\log(\mathbf{T}\cdot\mathbf{I}^\top), \mathbf{Q}\rangle \\
    \nonumber\text{s.t.} \quad & \mathbf{Q}\mathds{1}_{N} = \frac{1}{C}\mathds{1}_C, \ \mathbf{Q}^\top\mathds{1}_C = \frac{1}{N} \mathds{1}_N.
\end{align}

\noindent From this formulation, we obtain the OT matrix $\mathbf{Q}^\star$. We then apply the Argmax operation to each column of $\mathbf{Q}^\star$ to find the maximum value, i.e.,
\begin{equation*}
    \hat{y}_i = \arg \max_j \mathbf{Q}_{ij}.
\end{equation*}

\noindent\textbf{Harmonizing MAE and CE within NLPrompt.} The pseudo-labels generated by PromptOT are used to purify the dataset into two subsets: the clean dataset $\mathcal{D}_\text{clean}$ and the noisy dataset $\mathcal{D}_\text{noisy}$, defined as follows: 
\begin{align}
    \mathcal{D}_\text{clean} = \left\{i \mid \hat{y}_i = \tilde{y}_i \right\}, \quad \mathcal{D}_\text{noisy} = \left\{j \mid \hat{y}_j \neq \tilde{y}_j \right\}.
\end{align}
After the split, the two subsets are trained using different loss functions. For the clean dataset, we leverage the high performance of CE loss, while for the noisy dataset, we use MAE loss to enhance robustness. 
The harmonizing loss can be expressed as 
\begin{align}
\ell_\text{NLPrompt} &= \sum_{i \in \mathcal{D}_\text{clean}} -\mathbf{y}_{i}^\top \log \mathbf{s}_{i}  +  \sum_{j \in \mathcal{D}_\text{noisy}} ||\mathbf{y}_{j} - \mathbf{s}_j||_1.
\end{align}
where $\mathbf{y}_i$ is the target label and $\mathbf{s}_i$ is the output similarity for $i$-th sample.
\\

\noindent\textbf{Remark.} 
Our NLPrompt utilizes OT to harmonize CE and MAE. Unlike using generalized CE loss in noise-label prompt learning \cite{wuWhyPromptTuning2023}, our method fully exploits the advantages of prompt learning under vision-language foundation models. First, we utilize the text representation from prompt learning as a strong initial prototype. This allows our method to maintain global label consistency, setting it apart from other prediction-based methods. Additionally, we refine the dataset to take advantage of the robustness of mean absolute error, specifically for noisy samples, rather than treating both clean and noisy samples with the same loss. This flexibility not only enhances our model’s robustness but also allows us to leverage the advantages of CE, leading to improved overall performance.

\begin{table*}[htbp]
\centering
\caption{Performance metrics across various datasets and noise levels. (\%)\label{tab: main}}
\resizebox{\textwidth}{!}{%

\begin{tabular}{c|c|cccccc|cccccc}
\toprule
\multirow{2}{*}{\textbf{Dataset}} & \multirow{2}{*}{\textbf{Method}} & \multicolumn{6}{c|}{\textbf{Noise Rate: Sym}} & \multicolumn{6}{c}{\textbf{Noise Rate: Asym}} \\ [1.5pt]
                         &  & \textbf{12.5\%} & \textbf{25.0\%} & \textbf{37.5\%} & \textbf{50.0\%} & \textbf{62.5\%} & \textbf{75.0\%} & \textbf{12.5\%} & \textbf{25.0\%} & \textbf{37.5\%} & \textbf{50.0\%} & \textbf{62.5\%} & \textbf{75.0\%} \\ \midrule

\multirow{4}{*}{Flowers102} & CoOp & 88.93 & 83.50 & 77.93 & 70.10 & 55.60 & 37.17 & 86.97 & 74.70 & 60.43 & 42.60 & 26.53 & 12.60 \\
& GCE & 88.80 & 88.33 & 86.73 & 84.07 & 78.37 & 70.37 & 88.40 & 86.37 & 80.33 & 69.93 & 61.50 & 39.23 \\
& JoAPR & 85.57 & 81.23 & 74.60 & 70.23 & 67.90 & 66.93 & 85.17 & 79.63 & 73.97 & 73.83 & 53.37 & 13.27 \\
& NLPrompt & \textbf{93.87} & \textbf{92.57} & \textbf{92.73} & \textbf{89.90} & \textbf{84.77} & \textbf{76.80} & \textbf{93.80} & \textbf{93.40} & \textbf{91.77} & \textbf{81.10} & \textbf{73.63} & \textbf{55.33} \\ \midrule

\multirow{4}{*}{DTD} & CoOp & 56.00 & 49.57 & 43.30 & 34.37 & 27.83 & 17.27 & 55.60 & 47.75 & 38.07 & 29.63 & 20.53 & 11.70 \\
& GCE & 61.00 & 59.83 & 56.80 & 50.73 & 43.60 & 33.67 & 60.70 & 57.57 & 52.70 & 43.97 & 33.40 & 18.23\\
& JoAPR & 58.07 & 57.70 & 56.33 & 53.03 & 48.05 & 29.90 & 52.40 & 56.63 & 53.10 & 48.93 & 40.20 & 28.26  \\
& NLPrompt & \textbf{62.97} & \textbf{61.23} & \textbf{59.17} & \textbf{55.17} & \textbf{49.03} & \textbf{39.80} & \textbf{62.30} & \textbf{60.60} & \textbf{56.47} & \textbf{50.80} & \textbf{40.27} & \textbf{28.37} \\ \midrule

\multirow{4}{*}{EuroSAT} & CoOp & 76.50 & 69.23 & 61.67 & 52.33 & 37.63 & 26.70 & 76.00 & 66.27 & 53.83 & 41.17 & 28.00 & 17.43 \\
& GCE & 82.13 & 78.60 & 74.67 & 63.13 & 49.67 & 31.40 & 78.23 & 72.70 & 63.63 & 45.30 & 22.90 & 12.10\\
& JoAPR & 75.13 & 61.10 & 60.90 & 63.63 & 38.97 & 27.33 & 69.37 & 67.30 & 59.40 & 47.60 & 33.93 & 17.50 \\
& NLPrompt & \textbf{82.53} & \textbf{79.53} & \textbf{78.13} & \textbf{66.70} & \textbf{63.53} & \textbf{43.80} & \textbf{80.13} & \textbf{77.13} & \textbf{71.43} & \textbf{54.30} & \textbf{37.27} & \textbf{32.73} \\ \midrule

\multirow{4}{*}{OxfordPets} & CoOp & 76.50 & 66.73 & 60.33 & 47.03 & 35.77 & 24.60 & 76.10 & 66.20 & 52.53 & 38.73 & 26.63 & 14.90  \\
& GCE & 85.63 & 84.60 & 83.67 & 79.23 & 71.40 & 53.17 & 85.50 & 83.03 & 76.73 & 68.07 & 50.70 & 31.97 \\
& JoAPR & 84.00 & 83.26 & 83.20 & 83.10 & 82.40 & \textbf{74.40} & 82.90 & 83.40 & 79.07 & 75.84 & 52.74 & 43.57 \\
& NLPrompt & \textbf{86.17} & \textbf{86.00} & \textbf{85.33} & \textbf{84.87} & \textbf{83.63} & 70.77 & \textbf{86.00} & \textbf{84.97} & \textbf{82.40} & \textbf{77.53} & \textbf{66.33} & \textbf{48.60} \\ \midrule

\multirow{4}{*}{StanfordCars} & CoOp & 66.20 & 59.70 & 53.40 & 45.90 & 35.67 & 22.90 & 65.77 & 57.13 & 46.23 & 33.73 & 22.37 & 12.80 \\
& GCE & \textbf{69.70} & 66.40 & 66.47 & 63.77 & 59.25 & 50.87 & \textbf{70.00} & 66.45 & 61.23 & 53.67 & 39.65 & 26.60 \\
& JoAPR & 68.60 & 66.30 & 62.83 & 56.67 & 48.50 & 39.40 & 66.47 & 61.70 & 51.50 & 42.03 & 30.80 & 22.97 \\
& NLPrompt & 69.37 & \textbf{68.80} & \textbf{67.20} & \textbf{65.63} & \textbf{62.83} & \textbf{58.30} & 69.77 & \textbf{67.53} & \textbf{64.23} & \textbf{59.03} & \textbf{50.90} & \textbf{39.50} \\ \midrule

\multirow{4}{*}{UCF101} & CoOp & 69.03 & 63.40 & 58.23 & 49.73 & 40.83 & 26.30 & 67.23 & 58.07 & 46.47 & 34.43 & 23.67 & 13.17 \\
& GCE & 74.00 & \textbf{73.63} & 72.57 & 69.37 & 66.00 & 57.07 & 73.90 & 71.87 & 67.97 & 62.23 & 52.50 & 36.37 \\
& JoAPR & 72.83 & 71.17 & 70.37 & 67.63 & 65.30 & 57.67 & 72.07 & 69.80 & 64.10 & 59.17 & 56.07 & 47.46 \\
& NLPrompt & \textbf{74.83} & 73.40 & \textbf{72.83} & \textbf{70.33} & \textbf{68.10} & \textbf{60.53} & \textbf{74.90} & \textbf{73.53} & \textbf{71.03} & \textbf{65.97} & \textbf{58.97} & \textbf{49.27} \\ \bottomrule

\multirow{4}{*}{Caltech101} & CoOp & 86.43 & 81.03 & 76.73 & 70.90 & 61.33 & 46.90 & 84.93 & 75.23 & 62.87 & 49.43 & 33.57 & 20.33 \\
& GCE & \textbf{92.00} & 90.90 & \textbf{90.80} & 89.30 & 86.70 & 79.03 & 91.27 & \textbf{91.20} & 89.73 & 85.80 & 78.20 & 62.07 \\
& JoAPR  & 90.30 & 90.45 & 89.90 & 88.27 & 86.93 & 83.93 & 90.30 & 89.30 & 88.30 & 88.73 & 85.80 & 81.90 \\
& NLPrompt & 91.73 & \textbf{91.13} & 90.77 & \textbf{89.93} & \textbf{88.30} & \textbf{86.70} & \textbf{91.60} & 91.17 & \textbf{90.20} & \textbf{89.27} & \textbf{86.17} & \textbf{81.07} \\ \midrule

\end{tabular}%
}

\end{table*}

\section{Experiments}
In this section, we conduct comprehensive experiments to evaluate the performance of our method in noisy label scenarios, demonstrating the effectiveness of our method.
\subsection{Datasets and Baselines}
\textbf{Datasets.} To evaluate the performance of our method, we conduct experiments on seven synthetic noisy datasets: Caltech101~\cite{caltech101}, DTD~\cite{dtd}, EuroSAT~\cite{eurosat}, Flowers102~\cite{flowers102}, OxfordPets~\cite{oxfordpets}, StanfordCars~\cite{stanfordcars}, and UCF101~\cite{ucf101}. These representative visual classification datasets are used to simulate datasets with limited samples. They cover a variety of tasks, including general object classification, texture classification, fine-grained classification, action recognition, and satellite imagery recognition. Since these datasets do not contain noisy labels by default, we manually generate noisy labels for them. In addition, we conduct experiments on a real-world noisy label dataset, Food101N~\cite{food101n}, which inherently contains noisy labels and does not require manually synthesized noisy labels.
\\

\noindent \textbf{Baselines.} We compare our NLPrompt method with three baselines : CoOp \cite{CoOp}, CoOp+GCE \cite{wuWhyPromptTuning2023}, JoAPR \cite{guoJoAPRCleaningLens2024}. The latter two methods are specifically designed to tackle label noise in prompt learning for prompt learning in vision-language pretrained Models.

\subsection{Noise Setting}
For these synthetic noisy datasets, we introduce two types of noise: symmetric noise (denoted as Sym) and asymmetric noise (denoted as Asym). We only flip the labels of the training set in these datasets while keeping the test set unchanged. For symmetric noise, the clean labels in the training set are randomly flipped to other labels with equal probability. This means that labels within the same class can be incorrectly mapped to multiple different classes. For asymmetric noise, the clean labels in the training set are only flipped to a unique neighboring label, with labels within the same class being mapped exclusively to their successor class. Due to its stronger structural nature, asymmetric noise has a more significant negative impact on model performance, making it a stricter robustness test to evaluate the model's stability and adaptability when confronted with label noise. The goal of learning with noisy labels is to train a robust model on the noisy training set and achieve high accuracy on the clean test set.

\subsection{Implementation Details}
In our experiments, we adopt the same setup as CoOp \cite{CoOp} and JoAPR \cite{guoJoAPRCleaningLens2024} to ensure a fair comparison. We use the SGD optimizer with an initial learning rate of 0.002 and employ cosine annealing. Our model backbone is consistent with CoOp \cite{CoOp}, based on the pre-trained CLIP model \cite{CLIP}, utilizing either ResNet-50 or ViT-B/16 as the image encoder, with ResNet-50 as the default if not explicitly specified. We use a 63M parameter text transformer as the text encoder. The default number of training epochs is set to 200. Additionally, we employ 16 shared context tokens across all categories, with the class token placed at the end. We sample a 16-shot training set from each dataset and evaluate the model on the original test set. The reported experimental results are the averages of test accuracy from three runs with different seeds, with the highest accuracy highlighted in bold. All experiments were conducted using PyTorch \cite{paszkePytorchImperativeStyle2019} on a cluster equipped with NVIDIA A40 GPU. It is noted that our noise setting differs from that of JoAPR. Our setting is more challenging and practical. The differences between the two noise settings and more implementation details are provided in the Supplementary Materials.

\begin{table}[tbp]
\centering
\vspace{-1em}
\caption{Test accuracy (\%) on Food101N. \label{tab: food101n}}
\resizebox{0.8\linewidth}{!}{
\begin{tabular}{c|c|c|c|c}
\toprule
Method & CoOp & GCE & JoAPR & NLPrompt\\
\midrule
Accuracy & 69.50 & 71.32 & 72.57 & 76.46 \\
\bottomrule
\end{tabular}%
}
\vspace{-1em}
\end{table}

\subsection{Performance Comparison}
For the synthetic noisy datasets, we introduce noise of varying intensities, ranging from 12.5\% to 75.0\%, with an interval of 12.5\%. The experimental results are shown in Table \ref{tab: main}. In the vast majority of cases, our NLPrompt achieves state-of-the-art performance, with only a few instances where it performs almost identically to the best results, showing negligible differences. Furthermore, in scenarios with high levels of noise, our method consistently outperforms the other methods, showing a significant performance improvement. This demonstrates the effectiveness and superiority of our method in handling noisy labels in prompt learning. The experimental results on a real-world noisy dataset Food101N are shown in Table \ref{tab: food101n}, where our NLPrompt outperforms all baseline methods, further highlighting the superiority of our approach.

\begin{figure*}[htbp]
\centering
\includegraphics[width=0.9\linewidth]{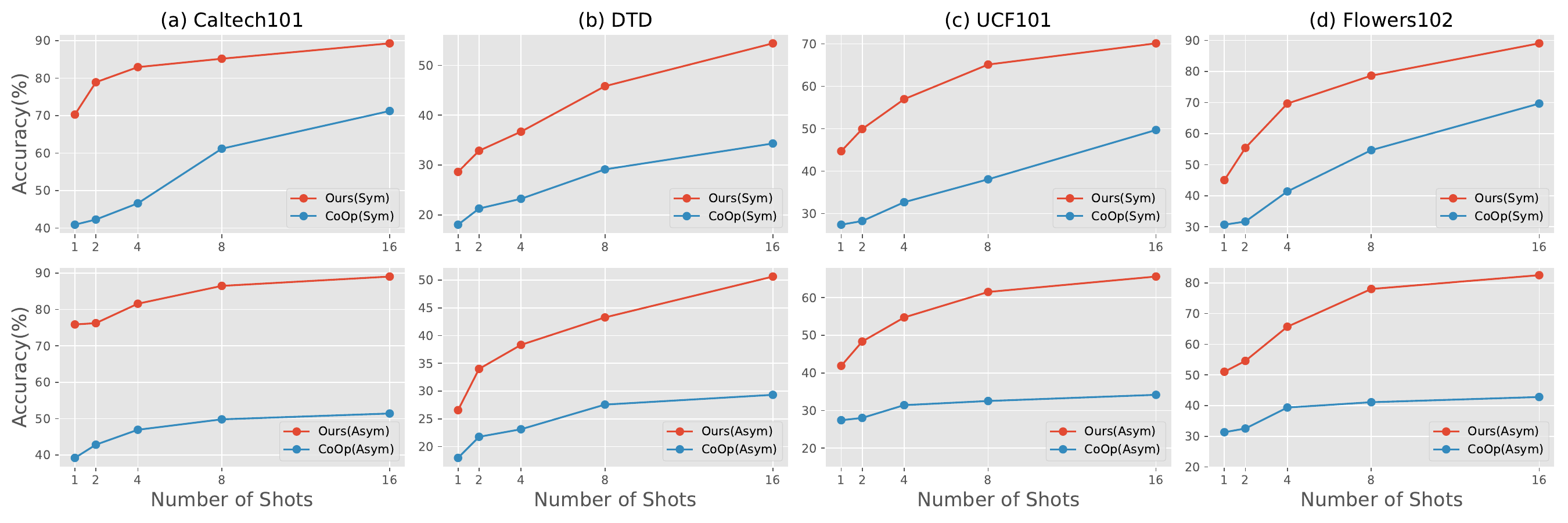}
 \caption{Performance with the different number of shots.}
 \label{fig:shot_num}
\end{figure*}

\subsection{The Generalization of NLPrompt}
Our method is effective not only for CoOp but also for other prompt-tuning approaches, including VPT \cite{jia2022visual}, Maple \cite{khattakMapleMultimodalPrompt2023}, and PromptSRC \cite{khattak2023self}, which are subsequent methods of CoOp. Additional results on the EuroSAT dataset under symmetric noise are shown in Table \ref{generalization}. It demonstrates the strong generalization ability of NLPrompt.
\begin{table}[tbp]
\vspace{-1em}
\setlength\tabcolsep{2pt}
\renewcommand\arraystretch{1}
\centering
\caption{The generalization of NLPrompt .\label{generalization}}
\label{result_promptsrc}
\resizebox{\columnwidth}{!}{
\begin{tabular}{c|c|c|c|c|c|c}
\toprule
Method/Noise Ratio    & 12.5\%  & 25.0\% & 37.5\% & 50.0\% & 62.5\% & 75.0\%   \\ \midrule
VPT & 89.20  & 79.43 & 65.20  & 61.37  & 41.67 & 27.67    \\
VPT+Ours  & \textbf{91.80}  & \textbf{91.07} & \textbf{89.53}  & \textbf{86.93}  & \textbf{80.73} & \textbf{73.90}    \\
\midrule
MaPLe & 83.27  & 77.67 & 65.27  & 55.40  & 37.53 & 25.47    \\
MaPLe+Ours  & \textbf{89.23}  & \textbf{84.30} & \textbf{78.37}  & \textbf{76.43}  & \textbf{73.30} & \textbf{59.87}    \\
\midrule
PromptSRC & 90.61  & 84.67 & 78.57  & 72.27  & 60.43 & 49.37    \\
PromptSRC+Ours  & \textbf{91.29}  & \textbf{87.67} & \textbf{84.97}  & \textbf{80.33}  & \textbf{72.10} & \textbf{59.50}    \\
\bottomrule
\end{tabular}}
\vspace{-2em}
\end{table}

\subsection{Few-shot Learning Analysis}

Following the few-shot evaluation setting used in previous work \cite{CoOp}, we further investigate the impact of the number of shots on different datasets. To this end, we vary the number of shots during training within the range of [1, 2, 4, 8, 16], while keeping the noise rate fixed at 50\%. The experimental results are shown in Figure \ref{fig:shot_num}, where the horizontal axis represents the number of shots and the vertical axis shows the test accuracy. We observe that as the number of shots increases, the performance of each method improves gradually. However, our method consistently outperforms the others, significantly enhancing the robustness of CoOp across different shot numbers and noise levels.

\begin{table}[htbp]
\centering
\caption{Ablation studies under multiple label noise ratios on Flowers102. (\%) \label{tab: ablation}}
\resizebox{\linewidth}{!}{%
\begin{tabular}{c|c|cccc|c}
\toprule
 & Method/Noise Ratio           & 10\%       & 30\%  & 50\%  & 70\%  & Avg \\ 
 \midrule
\multirow{2}{*}{w\textbackslash o OT} & (a) all data with CE          & 92.71 & 86.92 & 79.36 & 57.61 & 79.15\\ 
                            & (b) all data with MAE      & 88.47  & 89.07 & 85.20 & 80.87 & 85.90 \\ \midrule
\multirow{3}{*}{w\textbackslash \ OT}    & (c) w\textbackslash o text feature init & 87.16  & 83.81 & 79.77 & 73.00 & 80.94 \\ 
                            & (d) w\textbackslash o noisy data    & 84.77     & 84.53 & 81.60 & 77.60 & 82.16 \\ 
                            & (e) w\textbackslash o clean data   & 90.17  & 90.13 & 88.60 & 80.55 & 87.36 \\ \midrule
                            & NLPrompt               & \textbf{96.87}    & \textbf{93.44} & \textbf{92.30} & \textbf{85.38} & \textbf{92.00} \\ 
\bottomrule
\end{tabular}%
}
\end{table}
\subsection{Ablation Study}
To evaluate the effectiveness of each component of our method, we conduct ablation studies on the Flowers102 dataset. Here, we employ ViT-B/16 as the backbone and train for 100 epochs in the symmetric noise scenario. The experimental results are shown in Table \ref{tab: ablation}. 
To validate the effectiveness of OT, we designed two sets of experiments: one without using OT for data purification and another using OT for data purification. The experimental design is as follows: 
(a) Use CE loss for all data;
(b) Use MAE loss for all data;
(c) Use random initialization prototype instead of CLIP text feature as initialization;
(d) Use CE loss for clean data only after removing noisy data;
(e) Use MAE loss for noisy data only after removing clean data. 

The average results show that (b) outperforms (a), validating the effectiveness of our PromptMAE.
Moreover, the average results show that (d) outperforms (a), and (e) outperforms (b), further validating the effectiveness of PromptOT in the data purification process. Additionally, the comparison between (c) and NLPrompt highlights the importance of text feature initialization in our method. Among all methods, our NLPrompt achieves the best performance, with significant improvements over other baselines, further validating the effectiveness of each component.

\section{Conclusion}
In this study, we addressed the critical challenge of noisy labels in prompt learning for vision-language foundation models by introducing PromptMAE and PromptOT. Our findings demonstrate that adopting the MAE loss in prompt learning—despite its traditionally rare application in noisy-label scenarios—substantially enhances robustness and maintains high accuracy. By leveraging feature learning theory, we elucidated that MAE effectively suppresses the impact of noisy samples, thus improving the overall robustness. Furthermore, the introduction of PromptOT, a prompt-based OT data purification method, allows for an accurate partition of datasets into clean and noisy subsets. This selective application of CE loss to clean data and MAE loss to noisy data in NLPrompt underscores a simple yet powerful strategy for robust prompt learning. Extensive experiments conducted across various noise settings have confirmed the significant performance improvements. NLPrompt capitalizes on the expressive representation and precise alignment capabilities of vision-language models, presenting a promising solution to enhance the robustness of prompt learning in real-world scenarios. 
Extending NLPrompt to scenarios with unbalanced distributions is under consideration for the future work. 

\section*{Acknowledgement}
This work was supported by the National Natural Science Foundation of China (No.62406195, No.62303319), Shanghai Local College Capacity Building Program (23010503100), ShanghaiTech AI4S Initiative SHTAI4S202404, HPC Platform of ShanghaiTech University, MoE Key Laboratory of Intelligent Perception and Human-Machine Collaboration (ShanghaiTech University) and Shanghai Engineering Research Center of Intelligent Vision and Imaging. 

%\newpage
{
    \small
    \bibliographystyle{ieeenat_fullname}
    \bibliography{main}

\begin{thebibliography}{63}
\providecommand{\natexlab}[1]{#1}
\providecommand{\url}[1]{\texttt{#1}}
\expandafter\ifx\csname urlstyle\endcsname\relax
  \providecommand{\doi}[1]{doi: #1}\else
  \providecommand{\doi}{doi: \begingroup \urlstyle{rm}\Url}\fi

\bibitem[{Allen-Zhu} and Li(2023)]{allen-zhuUnderstandingEnsembleKnowledge2022}
Zeyuan {Allen-Zhu} and Yuanzhi Li.
\newblock Towards understanding ensemble, knowledge distillation and self-distillation in deep learning.
\newblock In \emph{The {{Eleventh International Conference}} on {{Learning Representations}}}, 2023.

\bibitem[Asano et~al.(2020)Asano, Rupprecht, and Vedaldi]{asanoSelflabellingSimultaneousClustering2019}
Y.~M. Asano, C. Rupprecht, and A. Vedaldi.
\newblock Self-labelling via simultaneous clustering and representation learning.
\newblock In \emph{International {{Conference}} on {{Learning Representations}}}, 2020.

\bibitem[Cao et~al.(2022)Cao, Chen, Belkin, and Gu]{caoBenignOverfittingTwolayer2022}
Yuan Cao, Zixiang Chen, Misha Belkin, and Quanquan Gu.
\newblock Benign overfitting in two-layer convolutional neural networks.
\newblock \emph{Advances in neural information processing systems}, 35:\penalty0 25237--25250, 2022.

\bibitem[Caron et~al.(2020)Caron, Misra, Mairal, Goyal, Bojanowski, and Joulin]{caronUnsupervisedLearningVisual2020}
Mathilde Caron, Ishan Misra, Julien Mairal, Priya Goyal, Piotr Bojanowski, and Armand Joulin.
\newblock Unsupervised learning of visual features by contrasting cluster assignments.
\newblock \emph{Advances in neural information processing systems}, 33:\penalty0 9912--9924, 2020.

\bibitem[Chang et~al.(2017)Chang, {Learned-Miller}, and McCallum]{changActiveBiasTraining2017}
Haw-Shiuan Chang, Erik {Learned-Miller}, and Andrew McCallum.
\newblock Active bias: {{Training}} more accurate neural networks by emphasizing high variance samples.
\newblock \emph{Advances in Neural Information Processing Systems}, 30, 2017.

\bibitem[Chang et~al.(2022)Chang, Shi, Tuan, and Wang]{changUnifiedOptimalTransport2022}
Wanxing Chang, Ye Shi, Hoang Tuan, and Jingya Wang.
\newblock Unified optimal transport framework for universal domain adaptation.
\newblock \emph{Advances in Neural Information Processing Systems}, 35:\penalty0 29512--29524, 2022.

\bibitem[Chang et~al.(2023)Chang, Shi, and Wang]{chang2023csot}
Wanxing Chang, Ye Shi, and Jingya Wang.
\newblock Csot: Curriculum and structure-aware optimal transport for learning with noisy labels.
\newblock \emph{Advances in Neural Information Processing Systems}, 36:\penalty0 8528--8541, 2023.

\bibitem[Chen et~al.(2023)Chen, Yao, Song, Li, Rao, and Zhang]{chen2022plot}
Guangyi Chen, Weiran Yao, Xiangchen Song, Xinyue Li, Yongming Rao, and Kun Zhang.
\newblock Plot: Prompt learning with optimal transport for vision-language models.
\newblock In \emph{The {{Eleventh International Conference}} on {{Learning Representations}}}, 2023.

\bibitem[Cheng et~al.(2021)Cheng, Zhu, Li, Gong, Sun, and Liu]{chengLearningInstancedependentLabel2020}
Hao Cheng, Zhaowei Zhu, Xingyu Li, Yifei Gong, Xing Sun, and Yang Liu.
\newblock Learning with {{Instance-Dependent Label Noise}}: {{A Sample Sieve Approach}}.
\newblock In \emph{International {{Conference}} on {{Learning Representations}}}, 2021.

\bibitem[Cimpoi et~al.(2014)Cimpoi, Maji, Kokkinos, Mohamed, and Vedaldi]{dtd}
Mircea Cimpoi, Subhransu Maji, Iasonas Kokkinos, Sammy Mohamed, and Andrea Vedaldi.
\newblock Describing textures in the wild.
\newblock In \emph{Proceedings of the {{IEEE}} Conference on Computer Vision and Pattern Recognition}, pages 3606--3613, 2014.

\bibitem[Cui et~al.(2024)Cui, Li, Wang, and Shi]{cui2024harmonizing}
Tianyu Cui, Hongxia Li, Jingya Wang, and Ye Shi.
\newblock Harmonizing generalization and personalization in federated prompt learning.
\newblock In \emph{International Conference on Machine Learning}, 2024.

\bibitem[Cuturi(2013)]{cuturiSinkhornDistancesLightspeed2013}
Marco Cuturi.
\newblock Sinkhorn distances: {{Lightspeed}} computation of optimal transport.
\newblock \emph{Advances in neural information processing systems}, 26, 2013.

\bibitem[{Fei-Fei} et~al.(2004){Fei-Fei}, Fergus, and Perona]{caltech101}
Li {Fei-Fei}, Rob Fergus, and Pietro Perona.
\newblock Learning generative visual models from few training examples: {{An}} incremental bayesian approach tested on 101 object categories.
\newblock In \emph{2004 Conference on Computer Vision and Pattern Recognition Workshop}, pages 178--178. IEEE, 2004.

\bibitem[Feng et~al.(2023)Feng, Ren, and Xie]{fengOtfilterOptimalTransport2023}
Chuanwen Feng, Yilong Ren, and Xike Xie.
\newblock {OT}-filter: {{An}} optimal transport filter for learning with noisy labels.
\newblock In \emph{Proceedings of the {{IEEE}}/{{CVF Conference}} on {{Computer Vision}} and {{Pattern Recognition}}}, pages 16164--16174, 2023.

\bibitem[Feng et~al.(2021)Feng, Shu, Lin, Lv, Li, and An]{fengCanCrossEntropy2021}
Lei Feng, Senlin Shu, Zhuoyi Lin, Fengmao Lv, Li Li, and Bo An.
\newblock Can cross entropy loss be robust to label noise?
\newblock In \emph{Proceedings of the Twenty-Ninth International Conference on International Joint Conferences on Artificial Intelligence}, pages 2206--2212, 2021.

\bibitem[Fini et~al.(2021)Fini, Sangineto, Lathuili{\`e}re, Zhong, Nabi, and Ricci]{finiUnifiedObjectiveNovel2021}
Enrico Fini, Enver Sangineto, St{\'e}phane Lathuili{\`e}re, Zhun Zhong, Moin Nabi, and Elisa Ricci.
\newblock A unified objective for novel class discovery.
\newblock In \emph{Proceedings of the {{IEEE}}/{{CVF International Conference}} on {{Computer Vision}}}, pages 9284--9292, 2021.

\bibitem[Ghosh et~al.(2017)Ghosh, Kumar, and Sastry]{ghoshRobustLossFunctions2017}
Aritra Ghosh, Himanshu Kumar, and P.~Shanti Sastry.
\newblock Robust loss functions under label noise for deep neural networks.
\newblock In \emph{Proceedings of the {{AAAI}} Conference on Artificial Intelligence}, 2017.

\bibitem[Guo et~al.(2022)Guo, Li, Zhao, Zhou, and Zha]{guoLearningReweightExamples2022}
Dandan Guo, Zhuo Li, He Zhao, Mingyuan Zhou, and Hongyuan Zha.
\newblock Learning to re-weight examples with optimal transport for imbalanced classification.
\newblock \emph{Advances in Neural Information Processing Systems}, 35:\penalty0 25517--25530, 2022.

\bibitem[Guo and Gu(2024)]{guoJoAPRCleaningLens2024}
Yuncheng Guo and Xiaodong Gu.
\newblock {{JoAPR}}: {{Cleaning}} the {{Lens}} of {{Prompt Learning}} for {{Vision-Language Models}}.
\newblock In \emph{Proceedings of the {{IEEE}}/{{CVF Conference}} on {{Computer Vision}} and {{Pattern Recognition}}}, pages 28695--28705, 2024.

\bibitem[Han et~al.(2018)Han, Yao, Yu, Niu, Xu, Hu, Tsang, and Sugiyama]{hanCoteachingRobustTraining2018}
Bo Han, Quanming Yao, Xingrui Yu, Gang Niu, Miao Xu, Weihua Hu, Ivor Tsang, and Masashi Sugiyama.
\newblock Co-teaching: {{Robust}} training of deep neural networks with extremely noisy labels.
\newblock \emph{Advances in neural information processing systems}, 31, 2018.

\bibitem[Helber et~al.(2019)Helber, Bischke, Dengel, and Borth]{eurosat}
Patrick Helber, Benjamin Bischke, Andreas Dengel, and Damian Borth.
\newblock Eurosat: {{A}} novel dataset and deep learning benchmark for land use and land cover classification.
\newblock \emph{IEEE Journal of Selected Topics in Applied Earth Observations and Remote Sensing}, 12\penalty0 (7):\penalty0 2217--2226, 2019.

\bibitem[Hendrycks et~al.(2019)Hendrycks, Lee, and Mazeika]{hendrycksUsingPretrainingCan2019}
Dan Hendrycks, Kimin Lee, and Mantas Mazeika.
\newblock Using pre-training can improve model robustness and uncertainty.
\newblock In \emph{International Conference on Machine Learning}, pages 2712--2721. PMLR, 2019.

\bibitem[Huang et~al.(2023)Huang, Cao, Wang, Cao, and Suzuki]{huang2023graph}
Wei Huang, Yuan Cao, Haonan Wang, Xin Cao, and Taiji Suzuki.
\newblock Graph neural networks provably benefit from structural information: A feature learning perspective.
\newblock \emph{arXiv preprint arXiv:2306.13926}, 2023.

\bibitem[Huang et~al.(2024{\natexlab{a}})Huang, Han, Chen, Cao, Xu, and Suzuki]{huang2024comparison}
Wei Huang, Andi Han, Yongqiang Chen, Yuan Cao, Zhiqiang Xu, and Taiji Suzuki.
\newblock On the comparison between multi-modal and single-modal contrastive learning.
\newblock \emph{Advances in Neural Information Processing Systems}, 37:\penalty0 81549--81605, 2024{\natexlab{a}}.

\bibitem[Huang et~al.(2024{\natexlab{b}})Huang, Shi, Cai, and Suzuki]{huang2023understanding}
Wei Huang, Ye Shi, Zhongyi Cai, and Taiji Suzuki.
\newblock Understanding convergence and generalization in federated learning through feature learning theory.
\newblock In \emph{The Twelfth International Conference on Learning Representations}, 2024{\natexlab{b}}.

\bibitem[Huang et~al.(2021)Huang, Du, Xue, Chen, Zhao, and Huang]{huangWhatMakesMultimodal2021a}
Yu Huang, Chenzhuang Du, Zihui Xue, Xuanyao Chen, Hang Zhao, and Longbo Huang.
\newblock What makes multi-modal learning better than single (provably).
\newblock \emph{Advances in Neural Information Processing Systems}, 34:\penalty0 10944--10956, 2021.

\bibitem[Jia et~al.(2022)Jia, Tang, Chen, Cardie, Belongie, Hariharan, and Lim]{jia2022visual}
Menglin Jia, Luming Tang, Bor-Chun Chen, Claire Cardie, Serge Belongie, Bharath Hariharan, and Ser-Nam Lim.
\newblock Visual prompt tuning.
\newblock In \emph{European conference on computer vision}, pages 709--727. Springer, 2022.

\bibitem[Jiang et~al.(2024)Jiang, Huang, Zhang, Suzuki, and Nie]{jiang2024unveil}
Jiarui Jiang, Wei Huang, Miao Zhang, Taiji Suzuki, and Liqiang Nie.
\newblock Unveil benign overfitting for transformer in vision: Training dynamics, convergence, and generalization.
\newblock \emph{Advances in Neural Information Processing Systems}, 37:\penalty0 135464--135625, 2024.

\bibitem[Khattak et~al.(2023{\natexlab{a}})Khattak, Rasheed, Maaz, Khan, and Khan]{khattakMapleMultimodalPrompt2023}
Muhammad~Uzair Khattak, Hanoona Rasheed, Muhammad Maaz, Salman Khan, and Fahad~Shahbaz Khan.
\newblock Maple: {{Multi-modal}} prompt learning.
\newblock In \emph{Proceedings of the {{IEEE}}/{{CVF Conference}} on {{Computer Vision}} and {{Pattern Recognition}}}, pages 19113--19122, 2023{\natexlab{a}}.

\bibitem[Khattak et~al.(2023{\natexlab{b}})Khattak, Wasim, Naseer, Khan, Yang, and Khan]{khattak2023self}
Muhammad~Uzair Khattak, Syed~Talal Wasim, Muzammal Naseer, Salman Khan, Ming-Hsuan Yang, and Fahad~Shahbaz Khan.
\newblock Self-regulating prompts: Foundational model adaptation without forgetting.
\newblock In \emph{Proceedings of the IEEE/CVF international conference on computer vision}, pages 15190--15200, 2023{\natexlab{b}}.

\bibitem[Krause et~al.(2013)Krause, Stark, Deng, and {Fei-Fei}]{stanfordcars}
Jonathan Krause, Michael Stark, Jia Deng, and Li {Fei-Fei}.
\newblock 3d object representations for fine-grained categorization.
\newblock In \emph{Proceedings of the {{IEEE}} International Conference on Computer Vision Workshops}, pages 554--561, 2013.

\bibitem[Lai et~al.(2022)Lai, Wang, Cheung, and Chuah]{laiSarSelfadaptiveRefinement2022}
Zhengfeng Lai, Chao Wang, Sen-ching Cheung, and Chen-Nee Chuah.
\newblock Sar: {{Self-adaptive}} refinement on pseudo labels for multiclass-imbalanced semi-supervised learning.
\newblock In \emph{Proceedings of the {{IEEE}}/{{CVF Conference}} on {{Computer Vision}} and {{Pattern Recognition}}}, pages 4091--4100, 2022.

\bibitem[Lee et~al.(2019)Lee, Yun, Lee, Lee, Li, and Shin]{leeRobustInferenceGenerative2019}
Kimin Lee, Sukmin Yun, Kibok Lee, Honglak Lee, Bo Li, and Jinwoo Shin.
\newblock Robust inference via generative classifiers for handling noisy labels.
\newblock In \emph{International Conference on Machine Learning}, pages 3763--3772. PMLR, 2019.

\bibitem[Lee et~al.(2018)Lee, He, Zhang, and Yang]{food101n}
Kuang-Huei Lee, Xiaodong He, Lei Zhang, and Linjun Yang.
\newblock Cleannet: {{Transfer}} learning for scalable image classifier training with label noise.
\newblock In \emph{Proceedings of the {{IEEE}} Conference on Computer Vision and Pattern Recognition}, pages 5447--5456, 2018.

\bibitem[Li et~al.(2024)Li, Huang, Wang, and Shi]{li2024global}
Hongxia Li, Wei Huang, Jingya Wang, and Ye Shi.
\newblock Global and local prompts cooperation via optimal transport for federated learning.
\newblock In \emph{Proceedings of the IEEE/CVF Conference on Computer Vision and Pattern Recognition}, pages 12151--12161, 2024.

\bibitem[Li et~al.(2020)Li, Socher, and Hoi]{liDividemixLearningNoisy2020}
Junnan Li, Richard Socher, and Steven~CH Hoi.
\newblock {{DivideMix}}: {{Learning}} with {{Noisy Labels}} as {{Semi-supervised Learning}}.
\newblock In \emph{International {{Conference}} on {{Learning Representations}}}, 2020.

\bibitem[Lyu and Tsang(2019)]{lyuCurriculumLossRobust2019}
Yueming Lyu and Ivor~W. Tsang.
\newblock Curriculum loss: {{Robust}} learning and generalization against label corruption.
\newblock \emph{arXiv preprint arXiv:1905.10045}, 2019.

\bibitem[Menon et~al.(2020)Menon, Rawat, Reddi, and Kumar]{menonCanGradientClipping2020}
Aditya~Krishna Menon, Ankit~Singh Rawat, Sashank~J. Reddi, and Sanjiv Kumar.
\newblock Can gradient clipping mitigate label noise?
\newblock In \emph{International {{Conference}} on {{Learning Representations}}}, 2020.

\bibitem[Nguyen et~al.(2022)Nguyen, Husain, Farfade, and van~den Hengel]{nguyenConfidentSinkhornAllocation2022}
Vu Nguyen, Hisham Husain, Sachin Farfade, and Anton van~den Hengel.
\newblock Confident sinkhorn allocation for pseudo-labeling.
\newblock \emph{arXiv preprint arXiv:2206.05880}, 2022.

\bibitem[Nilsback and Zisserman(2008)]{flowers102}
Maria-Elena Nilsback and Andrew Zisserman.
\newblock Automated flower classification over a large number of classes.
\newblock In \emph{2008 {{Sixth Indian}} Conference on Computer Vision, Graphics \& Image Processing}, pages 722--729. IEEE, 2008.

\bibitem[Pan et~al.(2024)Pan, Huang, and Shi]{panFederatedLearningVisionLanguage2024}
Bikang Pan, Wei Huang, and Ye Shi.
\newblock Federated {{Learning}} from {{Vision-Language Foundation Models}}: {{Theoretical Analysis}} and {{Method}}.
\newblock \emph{arXiv preprint arXiv:2409.19610}, 2024.

\bibitem[Parkhi et~al.(2012)Parkhi, Vedaldi, Zisserman, and Jawahar]{oxfordpets}
Omkar~M. Parkhi, Andrea Vedaldi, Andrew Zisserman, and C.~V. Jawahar.
\newblock Cats and dogs.
\newblock In \emph{2012 {{IEEE}} Conference on Computer Vision and Pattern Recognition}, pages 3498--3505. IEEE, 2012.

\bibitem[Paszke et~al.(2019)Paszke, Gross, Massa, Lerer, Bradbury, Chanan, Killeen, Lin, Gimelshein, and Antiga]{paszkePytorchImperativeStyle2019}
Adam Paszke, Sam Gross, Francisco Massa, Adam Lerer, James Bradbury, Gregory Chanan, Trevor Killeen, Zeming Lin, Natalia Gimelshein, and Luca Antiga.
\newblock Pytorch: {{An}} imperative style, high-performance deep learning library.
\newblock \emph{Advances in neural information processing systems}, 32, 2019.

\bibitem[Patel and Sastry(2023)]{patelAdaptiveSampleSelection2023}
Deep Patel and P.~S. Sastry.
\newblock Adaptive sample selection for robust learning under label noise.
\newblock In \emph{Proceedings of the {{IEEE}}/{{CVF Winter Conference}} on {{Applications}} of {{Computer Vision}}}, pages 3932--3942, 2023.

\bibitem[Peng et~al.(2018)Peng, Zhao, and Zhang]{ucf101}
Yuxin Peng, Yunzhen Zhao, and Junchao Zhang.
\newblock Two-stream collaborative learning with spatial-temporal attention for video classification.
\newblock \emph{IEEE Transactions on Circuits and Systems for Video Technology}, 29\penalty0 (3):\penalty0 773--786, 2018.

\bibitem[Radford et~al.(2021)Radford, Kim, Hallacy, Ramesh, Goh, Agarwal, Sastry, Askell, Mishkin, and Clark]{CLIP}
Alec Radford, Jong~Wook Kim, Chris Hallacy, Aditya Ramesh, Gabriel Goh, Sandhini Agarwal, Girish Sastry, Amanda Askell, Pamela Mishkin, and Jack Clark.
\newblock Learning transferable visual models from natural language supervision.
\newblock In \emph{International conference on machine learning}, pages 8748--8763. PMLR, 2021.

\bibitem[Shu et~al.(2022)Shu, Nie, Huang, Yu, Goldstein, Anandkumar, and Xiao]{shuTesttimePromptTuning2022}
Manli Shu, Weili Nie, De-An Huang, Zhiding Yu, Tom Goldstein, Anima Anandkumar, and Chaowei Xiao.
\newblock Test-time prompt tuning for zero-shot generalization in vision-language models.
\newblock \emph{Advances in Neural Information Processing Systems}, 35:\penalty0 14274--14289, 2022.

\bibitem[Sohn et~al.(2020)Sohn, Berthelot, Carlini, Zhang, Zhang, Raffel, Cubuk, Kurakin, and Li]{sohnFixmatchSimplifyingSemisupervised2020}
Kihyuk Sohn, David Berthelot, Nicholas Carlini, Zizhao Zhang, Han Zhang, Colin~A. Raffel, Ekin~Dogus Cubuk, Alexey Kurakin, and Chun-Liang Li.
\newblock Fixmatch: {{Simplifying}} semi-supervised learning with consistency and confidence.
\newblock \emph{Advances in neural information processing systems}, 33:\penalty0 596--608, 2020.

\bibitem[Song et~al.(2019)Song, Kim, and Lee]{songSelfieRefurbishingUnclean2019}
Hwanjun Song, Minseok Kim, and Jae-Gil Lee.
\newblock Selfie: {{Refurbishing}} unclean samples for robust deep learning.
\newblock In \emph{International Conference on Machine Learning}, pages 5907--5915. PMLR, 2019.

\bibitem[Wang et~al.(2022)Wang, Xia, Li, Mao, Feng, Chen, and Zhao]{wangSolarSinkhornLabel2022}
Haobo Wang, Mingxuan Xia, Yixuan Li, Yuren Mao, Lei Feng, Gang Chen, and Junbo Zhao.
\newblock Solar: {{Sinkhorn}} label refinery for imbalanced partial-label learning.
\newblock \emph{Advances in neural information processing systems}, 35:\penalty0 8104--8117, 2022.

\bibitem[Wen and Li(2021{\natexlab{a}})]{wen2021toward}
Zixin Wen and Yuanzhi Li.
\newblock Toward understanding the feature learning process of self-supervised contrastive learning.
\newblock In \emph{International Conference on Machine Learning}, pages 11112--11122. PMLR, 2021{\natexlab{a}}.

\bibitem[Wen and Li(2021{\natexlab{b}})]{wenUnderstandingFeatureLearning2021b}
Zixin Wen and Yuanzhi Li.
\newblock Toward understanding the feature learning process of self-supervised contrastive learning.
\newblock In \emph{International {Conference} on {Machine} {Learning}}, pages 11112--11122. PMLR, 2021{\natexlab{b}}.

\bibitem[Wu et~al.(2023)Wu, Tian, Yu, Wang, Morgado, Hu, and Yang]{wuWhyPromptTuning2023}
Cheng-En Wu, Yu Tian, Haichao Yu, Heng Wang, Pedro Morgado, Yu~Hen Hu, and Linjie Yang.
\newblock Why {{Is Prompt Tuning}} for {{Vision-Language Models Robust}} to {{Noisy Labels}}?
\newblock In \emph{Proceedings of the {{IEEE}}/{{CVF International Conference}} on {{Computer Vision}}}, pages 15488--15497, 2023.

\bibitem[Xia et~al.(2022)Xia, Tan, Wu, Xu, and Li]{xiaOtCleanerLabel2022}
Jun Xia, Cheng Tan, Lirong Wu, Yongjie Xu, and Stan~Z. Li.
\newblock {OT} cleaner: {{Label}} correction as optimal transport.
\newblock In \emph{{{ICASSP}} 2022-2022 {{IEEE International Conference}} on {{Acoustics}}, {{Speech}} and {{Signal Processing}} ({{ICASSP}})}, pages 3953--3957. IEEE, 2022.

\bibitem[Xia et~al.(2019)Xia, Liu, Wang, Han, Gong, Niu, and Sugiyama]{xiaAreAnchorPoints2019}
Xiaobo Xia, Tongliang Liu, Nannan Wang, Bo Han, Chen Gong, Gang Niu, and Masashi Sugiyama.
\newblock Are anchor points really indispensable in label-noise learning?
\newblock \emph{Advances in neural information processing systems}, 32, 2019.

\bibitem[Xia et~al.(2020)Xia, Liu, Han, Gong, Wang, Ge, and Chang]{xiaRobustEarlylearningHindering2020}
Xiaobo Xia, Tongliang Liu, Bo Han, Chen Gong, Nannan Wang, Zongyuan Ge, and Yi Chang.
\newblock Robust early-learning: {{Hindering}} the memorization of noisy labels.
\newblock In \emph{International Conference on Learning Representations}, 2020.

\bibitem[Yao et~al.(2018)Yao, Wang, Tsang, Zhang, Sun, Zhang, and Zhang]{yaoDeepLearningNoisy2018}
Jiangchao Yao, Jiajie Wang, Ivor~W. Tsang, Ya Zhang, Jun Sun, Chengqi Zhang, and Rui Zhang.
\newblock Deep learning from noisy image labels with quality embedding.
\newblock \emph{IEEE Transactions on Image Processing}, 28\penalty0 (4):\penalty0 1909--1922, 2018.

\bibitem[Yao et~al.(2020)Yao, Liu, Han, Gong, Deng, Niu, and Sugiyama]{yaoDualReducingEstimation2020}
Yu Yao, Tongliang Liu, Bo Han, Mingming Gong, Jiankang Deng, Gang Niu, and Masashi Sugiyama.
\newblock Dual {T}: {{Reducing}} estimation error for transition matrix in label-noise learning.
\newblock \emph{Advances in neural information processing systems}, 33:\penalty0 7260--7271, 2020.

\bibitem[Zheng et~al.(2021)Zheng, Liu, He, Mei, Luo, and Zha]{zhengGroupawareLabelTransfer2021}
Kecheng Zheng, Wu Liu, Lingxiao He, Tao Mei, Jiebo Luo, and Zheng-Jun Zha.
\newblock Group-aware label transfer for domain adaptive person re-identification.
\newblock In \emph{Proceedings of the {{IEEE}}/{{CVF}} Conference on Computer Vision and Pattern Recognition}, pages 5310--5319, 2021.

\bibitem[Zhou et~al.(2022{\natexlab{a}})Zhou, Yang, Loy, and Liu]{CoOp}
Kaiyang Zhou, Jingkang Yang, Chen~Change Loy, and Ziwei Liu.
\newblock Learning to prompt for vision-language models.
\newblock \emph{International Journal of Computer Vision}, 130\penalty0 (9):\penalty0 2337--2348, 2022{\natexlab{a}}.
\newblock ISBN: 0920-5691 Publisher: Springer.

\bibitem[Zhou et~al.(2022{\natexlab{b}})Zhou, Yang, Loy, and Liu]{zhouConditionalPromptLearning2022}
Kaiyang Zhou, Jingkang Yang, Chen~Change Loy, and Ziwei Liu.
\newblock Conditional prompt learning for vision-language models.
\newblock In \emph{Proceedings of the {{IEEE}}/{{CVF}} Conference on Computer Vision and Pattern Recognition}, pages 16816--16825, 2022{\natexlab{b}}.

\bibitem[Zhu et~al.(2023)Zhu, Niu, Han, Wu, and Zhang]{zhuPromptalignedGradientPrompt2023}
Beier Zhu, Yulei Niu, Yucheng Han, Yue Wu, and Hanwang Zhang.
\newblock Prompt-aligned gradient for prompt tuning.
\newblock In \emph{Proceedings of the {{IEEE}}/{{CVF International Conference}} on {{Computer Vision}}}, pages 15659--15669, 2023.

\bibitem[Zou et~al.(2021)Zou, Cao, Li, and Gu]{zou2021understanding}
Difan Zou, Yuan Cao, Yuanzhi Li, and Quanquan Gu.
\newblock Understanding the generalization of adam in learning neural networks with proper regularization.
\newblock In \emph{The {{Eleventh International Conference}} on {{Learning Representations}}}, 2021.

\end{thebibliography}
}

\clearpage
\setcounter{page}{1}
\onecolumn
\begin{center}
{ \linespread{1.5} \selectfont
\textbf{\Large\thetitle} \\
}
\Large Supplementary Materials
\end{center}
\appendix

%定义编号格式，在数字序号前加字符“A"
\renewcommand{\thetable}{A\arabic{table}}
\renewcommand{\thefigure}{A\arabic{figure}}
\makeatletter

% This section contains supplementary material that provides additional details for the main paper and further experimental analysis. This section follows the contents in the following order:

\startcontents[appendix]
\printcontents[appendix]{l}{1}{\section*{Supplementary Organization:}\setcounter{tocdepth}{2}}
% \onecolumn
%         \twocolumn[\centering
%         \Large
%         \textbf{\thetitle}\\
%         \vspace{0.5em}Supplementary Material \\
%         \vspace{1.0em}]
        
%     \onecolumn
% \maketitlesupplementary

\section{Algorithm Framework}
NLPrompt employs optimal transport to enhance robust prompt learning by categorizing data into clean and noisy subsets and adapting different loss to each subset. The following pseudo-code illustrates the computation process for NLPrompt:
\begin{algorithm}[htbp]
\caption{NLPrompt: Optimal Transport-Based Data Partition for Robust Prompt Learning\label{main_algo}}

\begin{algorithmic}[1]
\State Initialize text encoder $h$, image encoder $\mathbf{g}$, class prompts $\mathbf{p}_c$ and trainable prompt $\mathbf{p}$

\For{each batch $\{\mathbf{x}_i\}_{i=1}^B$}
    \State Compute image features $\mathbf{I}$ and text features $\mathbf{T}$
    \State Compute similarity matrix $\mathbf{S} = \mathbf{T} \mathbf{I}^\top$
    \State Solve OT problem (\ref{prob: OT}) to get $\mathbf{Q}^\star$
    \State Generate pseudo-labels $\tilde{y}_i = \arg\max_j \mathbf{Q}_{ij}^\star$
    \State Partition data into $\mathcal{D}_\text{clean}$ and $\mathcal{D}_\text{noisy}$
    \For{each sample $(\mathbf{x}_{i}, \tilde{y}_{i})$}
        \If{$i \in \mathcal{D}_\text{clean}$}
            \State Use CE loss to update prompts
        \Else
            \State Use MAE loss to update prompts
        \EndIf
    \EndFor
\EndFor
\State \Return Fine-tuned text prompt $\mathbf{p}$ 
\end{algorithmic}
\end{algorithm}

\section{Details of Dataset Setup}
We selected eight representative visual classification datasets as benchmarks and manually added noise to create synthetic noisy datasets. Additionally, we included a real-world noisy dataset, Food101N, which inherently contains noise and does not require manual modification. Detailed statistics for each dataset, including the original task, the number of classes, and the sizes of training and test samples, are presented in Table \ref{detail}.
\begin{table}[htbp]
\centering
\caption{The detailed statistics of datasets used in experiments.\label{detail}}
\label{tab:datasets}
\begin{tabular}{@{}lccccc@{}}
\toprule
Noise Type & Dataset            & Task                                  & Classes & Training Size & Testing Size  \\ \midrule
\multirow{8}{*}{Synthetic noisy dataset} & Caltech101  & Object recognition                   & 100     & 4,128         & 2,465        \\
 & Flowers102  & Fine-grained flowers recognition      & 102     & 4,093         & 2,463         \\
 & OxfordPets  & Fine-grained pets recognition         & 37      & 2,944         & 3,669           \\
 & UCF101      & Video action recognition              & 101     & 7,639        & 3,783        \\
 & DTD         & Texture recognition                  & 47      & 2,820         & 1,692         \\ 
 & EuroSAT    & Satellite image classification        & 10      & 13,500        & 8,100          \\
 & StanfordCars     & Fine-grained car recognition    & 196     & 6,509        & 8,041         \\ 
 & SUN397   & Scene recognition  & 397     & 15,880        & 19,850        \\ \midrule
Real-world noisy dataset & Food101N  & Fine-grained food recognition & 101      &     310,009      &   30,300     \\ \bottomrule
\end{tabular}
\end{table}

\section{Further Experiments}
\subsection{Experiments on SUN397}
We also conducted experiments on SUN397, a dataset with a large number of classes. The results are presented in Table \ref{tab: further}. On this dataset, our NLPrompt still outperforms all baseline methods, further highlighting the superiority of our approach.
\begin{table*}[htbp]
\centering
\caption{Test accuracy (\%) on SUN397. \label{tab: further}}
\resizebox{\textwidth}{!}{%

\begin{tabular}{c|c|cccccc|cccccc}
\toprule
\multirow{2}{*}{\textbf{Dataset}} & \multirow{2}{*}{\textbf{Method}} & \multicolumn{6}{c|}{\textbf{Noise Rate: Sym}} & \multicolumn{6}{c}{\textbf{Noise Rate: Asym}} \\ [1.5pt]
                         &  & \textbf{12.5\%} & \textbf{25.0\%} & \textbf{37.5\%} & \textbf{50.0\%} & \textbf{62.5\%} & \textbf{75.0\%} & \textbf{12.5\%} & \textbf{25.0\%} & \textbf{37.5\%} & \textbf{50.0\%} & \textbf{62.5\%} & \textbf{75.0\%} \\ \midrule

\multirow{3}{*}{SUN397} & CoOp & 65.50 & 62.90 & 59.30 & 55.50 & 48.30 & 37.80 & 63.50 & 56.10 & 45.50 & 33.80 & 22.10 & 11.40 \\
& GCE & 67.60 & 66.30 & 65.40 & 64.20 & 62.00 & 59.20 & 68.40 & 66.40 & 63.80 & 60.00 & 53.60 & 43.80 \\
& NLPrompt & \textbf{68.40} & \textbf{67.50} & \textbf{66.40} & \textbf{64.80} & \textbf{64.10} & \textbf{61.70} & \textbf{68.70} & \textbf{67.50} & \textbf{66.10} & \textbf{64.00} & \textbf{61.40} & \textbf{53.00} \\ \midrule

\end{tabular}%
}

\end{table*}

\subsection{Experiments on Purfication Strategy}
We present experiments to demonstrate that PromptOT is an effective purification method in prompt learning. We frame the purification task as a binary classification problem, where the goal is to distinguish between clean and noisy samples. To evaluate the purification performance, we use accuracy and F1-score metrics for this binary classification task. We compare our purification strategy with the partition method that uses pseudo-labels generated by CLIP zero-shot (denoted as CLIP-ZS), as well as the partition strategy used in JoAPR \cite{guoJoAPRCleaningLens2024}. For our experiments, we use the Caltech101, DTD, and Flowers datasets, with results shown in Table \ref{tab: purification}. From this table, we observe that our prompt-based OT selection method achieves higher purification accuracy compared to both the CLIP zero-shot partition and the JoAPR partition. 
\begin{table}[htbp]
\centering

\caption{Comparison between different purification strategies. (\%) \label{tab: purification}}
\resizebox{0.45\linewidth}{!}{%

\begin{tabular}{c|cc|cc|cc}
\toprule
  \multirow{2}{*}{\textbf{Method}}        & \multicolumn{2}{c|}{\textbf{Caltech101}}  & \multicolumn{2}{c|}{\textbf{DTD}}  & \multicolumn{2}{c}{\textbf{Flowers}}  \\ 
   & Acc  & F1  & Acc  & F1 & Acc & F1 \\ 
 \midrule
CLIP-ZS              & 70.38 & 58.35 & 56.25& 26.72 & 56.62 & 25.16 \\ 
JoAPR& 50.88 & 44.99 & 51.54 & 32.32 & 50.37 &32.56\\
NLPrompt               & \textbf{75.37} & \textbf{67.41} & \textbf{59.97} & \textbf{36.89}& \textbf{62.07} & \textbf{39.49} \\ 
\bottomrule
\end{tabular}%
}
\end{table}

\subsection{Hyperparameter Ablation}
We evaluate the impact of different context token lengths on the DTD dataset under 50\% symmetric noise in Table \ref{tab: nctx} and test various entropy regularization coefficients of OT in Figure \ref{fig:regularization_entropy}. The results indicate that our NLPrompt is robust to hyperparameters.

\begin{table}[h]
\setlength\tabcolsep{2pt}
\renewcommand\arraystretch{1}
\centering
\caption{Test accuracy (\%) under different context token lengths. \label{tab: nctx}}
\resizebox{0.6\columnwidth}{!}{
\begin{tabular}{cccccc}
\hline
Context length            & 1       & 4 & 8 & 16 & 32  \\ \hline
NLPrompt & 53.65$\pm$1.95 &  54.00$\pm$1.49 & 54.07$\pm$0.40 & 55.20$\pm$0.33 & 54.73$\pm$1.52  \\\hline
\end{tabular}}
\end{table}

\begin{figure*}[h]
\centering
\includegraphics[width=0.75\linewidth]{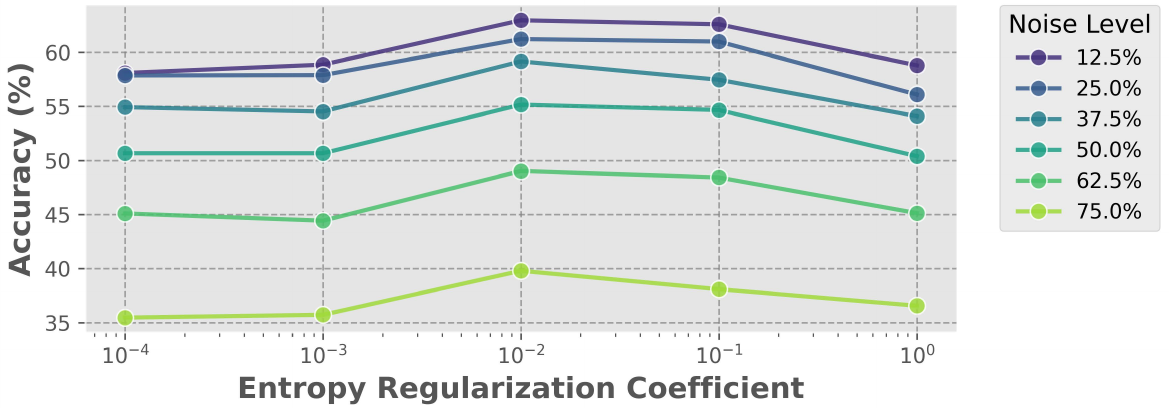}
 \caption{Test accuracy (\%) under different entropy regularization coefficients.} 
 \label{fig:regularization_entropy}
\end{figure*}

\subsection{Comparison with JoAPR}
The discrepancy in JoAPR results compared to the original paper arises from \textit{\textbf{different noise settings}}. JoAPR introduces noisy samples uniformly across each class, whereas we randomly distribute noisy samples throughout the entire training set. For example, at a 75\% noise rate with 16 shots, JoAPR's setting turns 12 out of 16 samples in each class into noisy samples and ensuring at least 4 clean samples per class. In contrast, our setting may lead to some classes having only 1 or even no clean samples. Thus, our setting is more challenging and practical, leading to significant performance variance at high noise levels. Additionally, we implemented our NLPrompt on the DTD dataset \textbf{\textit{using JoAPR’s settings}} and consistently observed improvements over JoAPR. The results are shown in Table \ref{tab:joapr}.
\begin{table}[htbp]
% \vspace{-1em}
\setlength\tabcolsep{2pt}
\renewcommand\arraystretch{1}
\centering
\caption{Comparison with JoAPR in the JoAPR's noise setting.}\label{tab:joapr}
\resizebox{0.5\columnwidth}{!}{
\begin{tabular}{c|c|c|c|c|c|c}
\toprule
Method/Noise Ratio    & 12.5\%  & 25.0\% & 37.5\% & 50.0\% & 62.5\% & 75.0\%   \\ \midrule
JoAPR & 58.83  & 57.67 & 55.70  & 53.07  & 50.67 & 46.30   \\
NLPrompt & \textbf{63.17} & \textbf{61.96} & \textbf{60.82 } & \textbf{59.63}  & \textbf{53.83} & \textbf{49.60}   \\

\bottomrule
\end{tabular}}
% \vspace{-1.2em}
\end{table}

\subsection{Comparison with Traditional LNL Method}
The traditional LNL method does not fully utilize the benefits of prompt learning with VLMs, whereas our method effectively leverage in prompt learning. In addition to the GCE method discussed in the paper, we also investigate the performance of Mixup on the EuroSAT dataset which is shown in Table \ref{tab:mixup}.

\begin{table}[htbp]
% \vspace{-1em}
\setlength\tabcolsep{2pt}
\renewcommand\arraystretch{1}
\centering
\caption{Test accuracy (\%) compared with Mixup.}\label{tab:mixup}
\resizebox{0.5\columnwidth}{!}{
\begin{tabular}{c|c|c|c|c|c|c}
\toprule
Method/Noise Ratio    & 12.5\%  & 25.0\% & 37.5\% & 50.0\% & 62.5\% & 75.0\%   \\ \midrule
CoOp & 76.50  & 69.23 & 61.67  & 52.33  & 37.63 & 26.70   \\
CoOp+Mixup  & 75.30  & 71.63 & 64.07  & 54.23  & 42.37 & 26.90    \\
NLPrompt  & \textbf{82.53}  & \textbf{79.53} & \textbf{78.13}  & \textbf{66.70}  & \textbf{63.53}  & \textbf{43.80}  \\

\bottomrule
\end{tabular}}
% \vspace{-1.2em}
\end{table}

\subsection{Computational Overhead of Optimal Transport}

We evaluate the computation time of optimal transport through experiments. Leveraging the efficient Sinkhorn algorithm for optimal transport, the computational overhead is minimal. For the Caltech101 dataset under a 16-shot learning setup, we perform batch matching, where 1600 image features are matched against 100 classes. The average time required for optimal transport per epoch is $0.00173$ seconds, while the original backward process takes $4.352$ seconds. Furthermore, even for significantly larger datasets, such as $100,000$ images $\times 1,000$ classes, the average computation time for the optimal transport method is approximately $1.888$ seconds, which remains relatively small compared to the overall training time.

\section{Limitation}
In this paper, we utilize prompt-based optimal transport for data purification, dividing the data into clean and noisy subsets. For the clean data, we apply cross-entropy (CE), while for the noisy data, we employ mean absolute error (MAE). As shown in Table \ref{tab: main}, our experimental results indicate that although NLPrompt achieves state-of-the-art performance in most cases, its performance is not always superior to other methods under low noise rates, with slight gaps compared to the best results. This suggests room for improvement in scenarios with low noise levels. This may be because, at low noise levels, optimal transport can misclassify some correct samples as noisy, leading to reduced performance of MAE on datasets with low noise rates.

\section{Theoretical Analysis for the Robustness of PromptMAE}
In this section, we demonstrate that the mean absolute error (MAE) loss is robust for prompt learning in vision-language foundation models. Leveraging the properties of vision-language pre-trained models, we assume that the latent spaces of the text encoder and image encoder are well-aligned. To clarify, we restate and explain some of our analysis settings. For a classification task, the objective is to classify an image $\mathbf{x}$ into its ground truth class $y \in [C]$, where $C$ represents the total number of classes. For simplicity, we assume that the features corresponding to these classes are orthogonal. In our theoretical analysis, we focus on a binary classification scenario, where \(y_i \in \{+1, -1\}\). In most theoretical work of feature learning, it is common to apply insights from binary classification to interpret experimental observations [1, 3]. So, we employ such theoretical frameworks to validate and support our experimental findings.

\noindent \textbf{Outline} In our proof, we begin by introducing the assumptions and feature modeling. We then analyze the gradient update using the chain rule and explore the relationship between the feature space and the gradient update. Next, we utilize the decomposition of trainable parameters to demonstrate how the decomposition coefficients change. By establishing the connection between the coefficients and the test loss, we compare the performance of different loss functions by examining the relationship between their respective coefficients.

 Following the standard feature learning theory \cite{allen-zhuUnderstandingEnsembleKnowledge2022}, we assume that the weights of the pretrained model consist of two components: task-relevant weights $\bm{\mu}$ and task-irrelevant weights $\bm{\xi}$. We begin by proving the following feature representation lemma.
\begin{lemma}[\textbf{Restatement of Lemma \ref{main_lemma}: Feature Representation}]
    \label{supp:main_lemma}
    At the \(t\)-th iteration, the learnable prompt \(\mathbf{p}^{(t)}\)  can be rewritten as a linear combination of features and prompt initialization:
    \begin{align}
        \nonumber \mathbf{p}^{(t)} &= \alpha^{(t)} \mathbf{p}^{(0)} + \beta^{(t)} ||\bm{\mu}||_2^{-2} \bm{\mu}  + \sum\limits_{l = 1}^L \phi_{l}^{(t)} ||\bm{\xi}_{l}||_2^{-2} \bm{\xi}_{l}, 
    \end{align}
    where \(\alpha^{(t)}\) are the coefficients of initialization, \(\beta^{(t)}\) is the coefficient of task-relevant features, \(\phi_{(\cdot)}^{(t)}\) are the coefficients of task-irrelevant features.
\end{lemma}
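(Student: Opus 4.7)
The plan is to proceed by induction on the iteration index $t$, showing that gradient-descent updates on $\mathbf{p}$ can only add components that already lie in $\mathrm{span}\{\bm{\mu},\bm{\xi}_1,\dots,\bm{\xi}_L\}$, so the decomposition is preserved step by step. At $t=0$ the claim holds trivially by choosing $\alpha^{(0)}=1$, $\beta^{(0)}=0$, and $\phi_l^{(0)}=0$ for all $l\in[L]$, which is the base case.

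For the inductive step, assume $\mathbf{p}^{(t)}$ admits the stated decomposition and write the update rule $\mathbf{p}^{(t+1)}=\mathbf{p}^{(t)}-\eta\,\nabla_{\mathbf{p}}\ell^{(t)}$, where $\ell^{(t)}$ is either $\ell_{\mathrm{CE}}$ or $\ell_{\mathrm{MAE}}$. The key observation is structural: by the definition of the text encoder in Eq.~(\ref{formu:def_of_text_encoder}), the learnable prompt $\mathbf{p}$ enters the loss only through the products $\mathbf{W}\mathbf{p}$ and $-\mathbf{W}\mathbf{p}$, so by the chain rule
\[
\nabla_{\mathbf{p}}\ell^{(t)} \;=\; \mathbf{W}^{\top}\mathbf{r}^{(t)}
\]
for some vector $\mathbf{r}^{(t)}\in\mathbb{R}^{L+1}$ that aggregates the softmax/similarity derivatives, the $\sigma'$ terms evaluated at $\pm\mathbf{W}\mathbf{p}+\mathbf{W}\mathbf{p}_c$, and the image-feature coordinates from Eq.~(\ref{formu: image_encoder}). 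Since the rows of $\mathbf{W}$ are precisely $\bm{\mu},\bm{\xi}_1,\dots,\bm{\xi}_L$ by Eq.~(\ref{formu: row of weight}), the gradient lies in $\mathrm{span}\{\bm{\mu},\bm{\xi}_1,\dots,\bm{\xi}_L\}$, so it contributes nothing along the $\mathbf{p}^{(0)}$ direction that is not already a feature direction.

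Expanding $\mathbf{W}^{\top}\mathbf{r}^{(t)}=r_0^{(t)}\bm{\mu}+\sum_{l=1}^{L}r_l^{(t)}\bm{\xi}_l$ and substituting into the update, I will read off the coefficient recursions
\begin{align*}
\alpha^{(t+1)} &= \alpha^{(t)},\\
\beta^{(t+1)} &= \beta^{(t)} - \eta\,\|\bm{\mu}\|_2^{2}\,r_0^{(t)},\\
\phi_l^{(t+1)} &= \phi_l^{(t)} - \eta\,\|\bm{\xi}_l\|_2^{2}\,r_l^{(t)},
\end{align*}
where the normalization factors $\|\bm{\mu}\|_2^{2}$ and $\|\bm{\xi}_l\|_2^{2}$ appear because the lemma absorbs a $\|\cdot\|_2^{-2}$ into the basis vectors so that each coefficient matches the corresponding inner product $\langle\mathbf{p}^{(t)},\bm{\mu}\rangle$ or $\langle\mathbf{p}^{(t)},\bm{\xi}_l\rangle$ (using orthogonality of the features). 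This immediately closes the induction.

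The main obstacle is bookkeeping rather than any deep analytic step: the derivatives of $\sigma(\pm\mathbf{W}\mathbf{p}+\mathbf{W}\mathbf{p}_c)$ and of the softmax inside the loss produce several terms that must be collected consistently for both loss functions, and one must verify that the class prompts $\mathbf{p}_c$ (which are fixed) do not introduce contributions outside the feature span. Since $\mathbf{p}_c$ enters only through $\mathbf{W}\mathbf{p}_c$ and is not differentiated, it only affects the scalar coefficients $r_l^{(t)}$, not the direction of the gradient, so the span property is preserved. Orthogonality of $\bm{\mu},\bm{\xi}_1,\dots,\bm{\xi}_L$ (assumed in the basic setting) ensures that the decomposition is unique, which is what lets us identify the coefficients $\alpha^{(t)},\beta^{(t)},\phi_l^{(t)}$ unambiguously and hence use them as a coordinate system for the subsequent dynamical analysis underlying Theorem~\ref{T4.2}.
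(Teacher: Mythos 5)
Your proposal is correct and matches the paper's argument: the paper likewise establishes the decomposition by computing $\nabla_{\mathbf{p}}L_{\mathcal{T}}(\mathbf{p}) = -\frac{1}{n}\sum_{i}\sum_{r}\ell'_i x_{r,i}\sigma'_{r,i}\mathbf{w}_r$, observing that this lies in the span of the rows $\mathbf{w}_r$ of $\mathbf{W}$ (i.e., of $\bm{\mu},\bm{\xi}_1,\dots,\bm{\xi}_L$), and reading off the same coefficient recursions for $\beta^{(t)}$ and $\phi_l^{(t)}$ that you obtain by induction. Your additional remarks on the base case, the fixed class prompts $\mathbf{p}_c$, and orthogonality-based uniqueness are consistent with (and slightly more explicit than) the paper's presentation.
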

\noindent\textbf{Disuccusion on feature decomposition intuition}  Decomposite the trainable parameters in latent space into linear combinations is a general technique in feature learning theory [1, 3]. Intuitively, in a text prompt, certain "core" words (such as adjectives describing class features before class names) play a key role in determining the image classification and are considered "task-relevant," while other words are "task-irrelevant." 

\noindent\textbf{Coefficient dynamics} Inspired by the previous study \cite{caoBenignOverfittingTwolayer2022}, we analyze the dynamics of coefficients in prompt fine-tuning from vision-language foundation models. By analyzing the dynamics of the coefficients, we can reveal the feature learning procedure during training. This analysis allows us to establish the order of coefficients and explore how they are affected by the noisy rate \(p\). 

\noindent \textbf{Loss design} Our goal here is to compare two different types of loss functions: cross-entropy loss and mean absolute error loss.
\begin{align}
    \ell_\text{CE}(\mathbf{s}_i, \mathbf{y}_i) = \sum\limits_{c=1}^C -y_{i, c} \log s_{i, c} = -\mathbf{y}_i\log \mathbf{s}_i, \\
    \ell_\text{MAE}(\mathbf{s}_i, \mathbf{y}_i) = \sum\limits_{c=1}^C | y_{i, c} - s_{i, c}| = ||\mathbf{y_i - \mathbf{s_i}}||_1
\end{align}

\noindent Our analysis will be made under the following assumption: 
\begin{assumption}\label{assum:numerical}Suppose that:
    \begin{itemize}
        \item The number of training samples $N = \Omega(\text{polylog}(d))$, where $d$ is the dimension of learnable prompts.
        \item The dimension of latent space $m$ is sufficiently large, i.e., $m = \tilde{\Omega}(N)$.
        \item The learning rate $\eta \leq \tilde{O}(\min \{||\bm\mu||_2^2, \sigma_p^{-2}m^{-1}\})$ and the standard deviation of network weight initialization $\sigma_0 \leq \tilde{O}(mn)\cdot \min \{(||\bm\mu||_2^2, \sigma_p\sqrt{d}^{-1})\}$.
        \item we assume that $\bm{\mu}^T \mathbf{p}_{+1} \geq 0 \geq \bm{\mu}^T \mathbf{p}_{-1}$ which implies a separability condition in the latent space. 
    \end{itemize}
\end{assumption}

\noindent\textbf{Remark.} In this assumption, a sufficiently large number of training samples and latent dimension are used to ensure that the network has concentration properties. Meanwhile, a sufficiently small learning rate and appropriate weight initialization are employed to guarantee that gradient descent, in the theoretical analysis, leads to loss convergence.

\noindent\textbf{Gradient analysis.} In the definition of the text encoder (\ref{formu:def_of_text_encoder}), the incorporation of \(\mathbf{W}\mathbf{p}_c\) introduces nonlinearity between the trainable prompt and the class prompt while maintaining the overall function's nonlinear nature. The assumptions regarding the image encoder (\ref{formu: image_encoder}) suggest that task-relevant features differ depending on whether the label is positive or negative, while task-irrelevant features remain arbitrary and independent of the label's polarity. Furthermore, the training loss objective is designed to strengthen the similarity between the image feature \(g(\mathbf{x}_{i})\) and the text feature generated by the label class prompt \(\mathbf{p}_{y_{i}}\). This section discusses the computational approach used to analyze the performance of the algorithm, focusing on the gradient calculations essential for optimizing the model parameters. Leveraging the properties of the gradient of the Softmax function, we have: 
\begin{align}
    \frac{\partial s_{i, c}}{\partial \textbf{sim}(\mathbf{g_i}, h_c)} = (1-s_{i, c})s_{i, c}, \qquad
    \frac{\partial s_{i, c}}{\partial \textbf{sim}(\mathbf{g_i}, h_{-c})} = s_{i, -c} s_{i, c}.
\end{align}
We can derive the gradient of $s_{i,c}$ with respect to learnable prompt $\mathbf{p}$ with chain rule. 
\begin{align}
    \label{formula: gradient of logit}
    \frac{\partial s_{i, c}}{\partial \mathbf{p}} &= (1-s_{i, c})s_{i, c} \frac{\partial{\textbf{sim}(\mathbf{g}_i, \mathbf{h}_{c})}}{\partial \mathbf{p}}-s_{i, c}s_{i, -c}\frac{\partial{\textbf{sim}(\mathbf{g}_i, \mathbf{h}_{-c})}}{\partial \mathbf{p}}\\
    &= s_{i, c}s_{i, -c}\left(\frac{\partial{\textbf{sim}(\mathbf{g}_i, \mathbf{h}_{c})}}{\partial \mathbf{p}} -\frac{\partial{\textbf{sim}(\mathbf{g}_i, \mathbf{h}_{-c})}}{\partial \mathbf{p}}\right).
\end{align}
To calculate the gradient, we need the partial derivatives of $\mathbf{sim}(\mathbf{g}_{i}, \mathbf{h}_{c})$ with respect to $\mathbf{p}$. Recall from (\ref{formu:def_of_text_encoder}) and (\ref{formu: similarity}) that the similarity is defined as:  
\begin{align}  
    \textbf{sim}(\mathbf{g}_i, \mathbf{h}_c) = \langle \sigma(\mathbf{Wp} + \mathbf{Wp}_c) - \sigma(-\mathbf{Wp} + \mathbf{Wp}_c), g(\mathbf{x}_i) \rangle.  
\end{align}  
The gradient of this similarity can be derived as:
\begin{align}
\frac{\partial \mathbf{sim}(\mathbf{g}_i, \mathbf{h}_c)}{\partial \mathbf{p}} =(\mathbf{W}^T& (\sigma'(\mathbf{W}\mathbf{p} + \mathbf{W}\mathbf{p}_c)+\sigma'(-\mathbf{W}\mathbf{p} + \mathbf{W}\mathbf{p}_{c})))\cdot g(\mathbf{x}_i).
\end{align}
Using the previously derived gradient, we have
\begin{align}
    \frac{\partial \ell}{\partial \mathbf{p}} &= \frac{\partial \ell}{\partial s_{i, \tilde{y}_i}}\frac{\partial s_{i, \tilde{y}_i}}{\partial \mathbf{p}}= \frac{\partial \ell}{\partial s_{i, \tilde{y}_i}}s_{i, \tilde{y}_i}s_{i, -\tilde{y}_i}\left(\frac{\partial{\textbf{sim}(\mathbf{g}_i, \mathbf{h}_{\tilde{y}_i})}}{\partial \mathbf{p}} - \frac{\partial{\textbf{sim}(\mathbf{g}_i, \mathbf{h}_{-\tilde{y}_i})}}{\partial \mathbf{p}}\right).
\end{align}
The gradient for different loss functions can then be computed as follows:
\begin{align}
    \frac{\partial \ell_{CE}}{\partial s_{i, \tilde{y}_i}} &= -\frac{1}{s_{i, \tilde{y}_i}}, \qquad \frac{\partial \ell_{MAE}}{\partial s_{i, \tilde{y}_i}} = -2
\end{align}
Here, we define $\ell'_{i} = \frac{\partial \ell}{\partial s_{i, \tilde{y}_i}}s_{i, \tilde{y}_i}s_{i, -\tilde{y}_i}$ as the gradient coefficient. For cross-entropy loss, this simplifies to $\ell'_{i} = s_{i, -\tilde{y}_i}$, while for mean absolute error, it becomes $\ell'_{i} = 2 s_{i, \tilde{y}_i}s_{i, -\tilde{y}_i}$. Defining $\sigma'_{r, i} := \sigma(\mathbf{w}_r^T \mathbf{p} + \mathbf{w}_r^T \mathbf{p}_{\tilde{y}_i}) + \sigma(-\mathbf{w}_r^T \mathbf{p} + \mathbf{w}_r^T \mathbf{p}_{\tilde{y}_i}) - \sigma(\mathbf{w}_r^T \mathbf{p} + \mathbf{w}_r^T \mathbf{p}_{-\tilde{y}_i}) - \sigma(-\mathbf{w}_r^T \mathbf{p} + \mathbf{w}_r^T \mathbf{p}_{-\tilde{y}_i})$, the gradient can be expressed as follows:
\begin{align}
    \nabla_\mathbf{p}L_{\mathcal{T}}(\mathbf{p}) = -\frac{1}{n}\sum\limits_{i=1}^n \sum\limits_{r=1}^m \ell'_i x_{r, i}\sigma'_{r, i} \mathbf{w}_r.
\end{align}
Due to the update rule of gradient descent and the previous gradient formula, we can rewrite the update equation as follows:
\begin{align}
    \mathbf{p}^{(t+1)} &= \mathbf{p}^{(t)} - \eta \nabla_{\mathbf{p}} L_{\mathcal{T}}(\mathbf{p}^{(t)})\\
    &= \mathbf{p}^{(t)} + \frac{\eta}{n}\sum\limits_{i=1}^n \sum\limits_{r=1}^m \ell'_i x_{r, i}\sigma'_{r, i} \mathbf{w}_r,
\end{align}
where $\eta \geq 0$ is the learning rate. Next, we use the assumption about the rows of the weight matrix, as shown in (\ref{formu: row of weight}). This leads to the update formula for the corresponding rows of the features:
\begin{align}
    \label{formu: iteration of beta}\beta^{(t+1)} &= 
    \beta^{(t)} + \frac{\eta}{n}\sum\limits_{i=1}^n \ell'_i \sigma'_{1, i} \tilde{y}_iy_i ||\bm{\mu}||_2^2\\
    \nonumber&=\beta^{(t)} + \frac{\eta}{n}\sum\limits_{i\in S_+} \ell'_i \sigma'_{1, i} ||\bm{\mu}||_2^2  - \frac{\eta}{n}\sum\limits_{i\in S_-} \ell'_i \sigma'_{1, i} ||\bm{\mu}||_2^2\\
    \label{formu: iteration of phi}\phi_l^{(t+1)} &= \phi^{(t)} +  \frac{\eta}{n} \sum\limits_{i=1}^n \ell'_i \sigma'_{1, i} \tilde{y}_i  x_{l+1, i}||\bm{\xi}_l||_2^2.
\end{align}

% \textbf{Remark.} The fact that the expectation is greater than 1/2 does not imply that the classification is always correct. The accuracy is also influenced by the signal-to-noise ratio. A higher signal-to-noise ratio leads to better performance, while a lower ratio results in worse performance.

\subsection{Theoretical analysis}
Our analysis follows this logic: both CE and MAE will increase task-relevant and task-irrelevant features for clean data. However, for noisy data, this leads to a decrease in task-relevant features and an increase in task-irrelevant features, causing the SNR to decrease for both. 
% Intuitively: MAE:$20/4$, CE:$10/2$(This comes from the property that MAE have larger weight for clean data), after introduce some noise data: MAE:$20-2/4+2 = 18/6=3$, CE:$10-4/2+4=6/6=1$. Thus will lead to increasing in SNR.
Inspired by \cite{panFederatedLearningVisionLanguage2024}, we introduce the following lemma. 
\begin{lemma}
    Under prompt learning, the test loss can be evaluated with the ratio between the task-relevant coefficient and task-irrelevant coefficient. 
\end{lemma}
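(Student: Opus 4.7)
The plan is to express the population test loss as a monotone function of the margin between the correct-class and incorrect-class similarity scores, and then decompose that margin, via Lemma \ref{main_lemma}, into a ``signal'' contribution proportional to the task-relevant coefficient $\beta^{(t)}$ and a ``noise'' contribution that is a Gaussian-weighted linear combination of the task-irrelevant coefficients $\{\phi_l^{(t)}\}$. Concentration of the noise term then turns the margin into an SNR-type quantity $\beta^{(t)}/\|\phi^{(t)}\|$, and pushing this through the loss yields the desired control.

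First, I would fix a test point $(\mathbf{x}, y)$ with image feature $\mathbf{g} = [y, x_1, \ldots, x_L]^\top$ where $x_l \sim \mathcal{N}(0, \sigma_p^2)$ i.i.d.\ by (\ref{formu: image_encoder}). Using the orthogonality of $\bm{\mu}, \bm{\xi}_1, \ldots, \bm{\xi}_L$ together with the row structure $\mathbf{W} = [\bm{\mu}, \bm{\xi}_1, \ldots, \bm{\xi}_L]^\top$, I evaluate $\mathbf{W}\mathbf{p}^{(t)}$ coordinate-by-coordinate using Lemma \ref{main_lemma}: the first coordinate is $\beta^{(t)} + \alpha^{(t)} \bm{\mu}^\top \mathbf{p}^{(0)}$ and the $(l{+}1)$-th coordinate is $\phi_l^{(t)} + \alpha^{(t)} \bm{\xi}_l^\top \mathbf{p}^{(0)}$. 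Plugging into the text-encoder definition (\ref{formu:def_of_text_encoder}) and then into the similarity $\text{sim}(\mathbf{g}, \mathbf{h}_c) = \langle \mathbf{g}, \mathbf{h}_c\rangle$, the margin decomposes as
\[
\text{sim}(\mathbf{g}, \mathbf{h}_y) - \text{sim}(\mathbf{g}, \mathbf{h}_{-y}) \;=\; y\,\Delta_\mu(\beta^{(t)}) \;+\; \sum_{l=1}^L x_l\,\Delta_{\xi_l}(\phi_l^{(t)}),
\]
where $\Delta_\mu$ depends only on $\beta^{(t)}$ and the class-prompt quantities $\bm{\mu}^\top \mathbf{p}_{\pm 1}$, and each $\Delta_{\xi_l}$ depends only on $\phi_l^{(t)}$ and $\bm{\xi}_l^\top \mathbf{p}_{\pm 1}$.

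Second, under Assumption \ref{assum:numerical}, the separability condition $\bm{\mu}^\top \mathbf{p}_{+1} \geq 0 \geq \bm{\mu}^\top \mathbf{p}_{-1}$ ensures $y\,\Delta_\mu \gtrsim |\beta^{(t)}|$ up to a constant. Since the $x_l$ are i.i.d.\ centered Gaussians, a standard sub-Gaussian tail bound gives
\[
\Bigl|\sum_{l=1}^L x_l\,\Delta_{\xi_l}\Bigr| \;\lesssim\; \sigma_p\sqrt{\log d}\,\bigl\|(\Delta_{\xi_1}, \ldots, \Delta_{\xi_L})\bigr\|_2 \;\lesssim\; \sigma_p\sqrt{\log d}\,\Bigl(\sum_l (\phi_l^{(t)})^2\Bigr)^{1/2}
\]
with probability at least $1 - d^{-1}$, matching the probability guarantee in Theorem \ref{T4.2}. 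Hence the margin is bounded below by a positive multiple of
\[
\beta^{(t)} \;-\; C\,\sigma_p\sqrt{\log d}\,\Bigl(\sum_l (\phi_l^{(t)})^2\Bigr)^{1/2},
\]
so the correct class wins, and the softmax score $s_y$ approaches $1$, exactly when the \emph{ratio} $\beta^{(t)}/\bigl(\sum_l(\phi_l^{(t)})^2\bigr)^{1/2}$ is large. Since both $\ell_{\mathrm{CE}}$ and $\ell_{\mathrm{MAE}}$ are monotonically decreasing in $s_y$ on the relevant region, the test loss $\ell_{\mathcal{D}}(\mathbf{p}^{(t)})$ is controlled by a decreasing function of this ratio, which is the assertion of the lemma.

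The main obstacle will be handling the nonlinearity $\sigma$ inside $\mathbf{h}_c$. Because $\sigma(\mathbf{W}\mathbf{p} + \mathbf{W}\mathbf{p}_c) - \sigma(-\mathbf{W}\mathbf{p} + \mathbf{W}\mathbf{p}_c)$ is not linear in $\mathbf{p}$, I will argue that in the regime fixed by Assumption \ref{assum:numerical} each coordinate's activation pattern is pinned by the sign of $\bm{\mu}^\top \mathbf{p}_c$ (resp.\ $\bm{\xi}_l^\top \mathbf{p}_c$), so the two $\sigma$-terms collapse to an expression linear in $\beta^{(t)}$ (resp.\ $\phi_l^{(t)}$) up to lower-order perturbations. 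The residual term driven by the initialization $\alpha^{(t)} \mathbf{p}^{(0)}$ must also be made negligible; I would do this by invoking the smallness of $\sigma_0$ in Assumption \ref{assum:numerical} together with Gaussian-projection concentration, so that $\alpha^{(t)} \bm{\mu}^\top \mathbf{p}^{(0)}$ and $\alpha^{(t)} \bm{\xi}_l^\top \mathbf{p}^{(0)}$ are of lower order than $\beta^{(t)}$ and $\phi_l^{(t)}$, respectively. Once these two technical points are established, the SNR bound and the monotonicity of the loss combine immediately to yield the claim.
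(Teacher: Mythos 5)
Your proposal follows essentially the same route as the paper: both expand the similarity margin via the feature representation of Lemma \ref{main_lemma} into a task-relevant term (the paper's $F_1(\mathbf{p}^{(t)})$, your $\Delta_\mu(\beta^{(t)})$) plus a Gaussian-weighted sum over the task-irrelevant coefficients (the paper's $\sum_l y_i x_{i,l} F_{l+1}(\mathbf{p}^{(t)})$), use the separability assumption to sign the first term, and conclude the test loss is a monotone function of the signal-to-noise ratio. The only real difference is that you control the noise term with a sub-Gaussian tail bound at probability $1-d^{-1}$ whereas the paper simply takes expectations and writes $\ell_{\mathcal{D}} \sim F_1/\sum_l F_{l+1}$; your version is a more quantitative rendering of the same step, and your flagged technical points (pinning the activation pattern of $\sigma$ and absorbing the $\alpha^{(t)}\mathbf{p}^{(0)}$ residual) are exactly the issues the paper handles implicitly by treating $\mathbf{W}\mathbf{p}_{\pm}$ as constants.
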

\begin{proof}
    For simplicity, we first introduce the definitions of $F_{+}$ and $F_{-}$. Here $F_{+}$ means the train loss corresponding to the positive class, while $F_{-}$ means the train loss corresponding to the negative class.
\begin{align}
    &F_{+}(\mathbf{p}) = \sigma(\mathbf{W}\mathbf{p}+ \mathbf{W}\mathbf{p}_{+}) - \sigma(-\mathbf{W}\mathbf{p}+ \mathbf{W}\mathbf{p}_{+}),\\
    &F_{-}(\mathbf{p}) = \sigma(\mathbf{W}\mathbf{p}+ \mathbf{W}\mathbf{p}_{-}) - \sigma(-\mathbf{W}\mathbf{p}+ \mathbf{W}\mathbf{p}_{-}),\\
    &F(\mathbf{p}) = F_{+}(\mathbf{p}) - F_{-}(\mathbf{p}). \label{formu: introduce of F}
\end{align}
From (\ref{formu: introduce of F}), we have that the following expressions are equivalent:
\begin{align}
    \langle F_{y_i}(\mathbf{p}) - F_{-y_i}(\mathbf{p}),  g(\mathbf{x}_i)\rangle \geq 0 \Longleftrightarrow y_i\langle F(\mathbf{p}), g(\mathbf{x}_i)\rangle \geq 0.
\end{align}
Note that
\begin{align}
        \mathbf{W}\mathbf{p}= \begin{bmatrix}
            \bm\mu^\top \mathbf{p}\\
            \bm\xi_1^\top \mathbf{p}\\
            \vdots\\
            \bm\xi_L^\top \mathbf{p}
        \end{bmatrix}=\begin{bmatrix}
            \beta ||\bm\mu||_2 \\
            \phi_1 ||\bm\xi_1||_2\\
            \vdots\\
            \phi_L ||\bm\xi_L||_2
        \end{bmatrix}.
\end{align}
Also, since the weight and class prompts are fixed during prompt learning, $\mathbf{Wp}_{+}$ and $\mathbf{Wp}_{-}$ can be treated as two constant terms. To assess the algorithm's performance, we evaluate the error rate during the testing procedure, which serves as our test loss, denoted as $\ell_{\mathcal{D}}$: 
\begin{equation}
    \begin{aligned}
         \ell_{\mathcal{D}}(\mathbf{p}) &=\frac{1}{n}\sum\limits_{i=1}^{n} \mathds{1}(\hat{y}_{i} = y_{i}).
    \end{aligned}
\end{equation}
Recall that the test error is minimized when $s_{i, y_i}$ exceeds $s_{i, -y_i}$. Therefore, the accuracy of the $i$-th sample is equivalent to:
\begin{align}
    \mathds{1}(\hat{y}_{i} = y_{i}) = \mathds{P}(s_{i, y_i} - s_{i, -y_i} > 0).
\end{align}

\noindent According to the monotonicity of the softmax function, we have that 
\begin{align}
    s_{i, y_i} - s_{i, -y_i} > 0 \Longleftrightarrow \textbf{sim}(\mathbf{g}_i, h_{y_i}) - \textbf{sim}(\mathbf{g}_i, h_{-y_i})> 0.
\end{align}
Considering each row of $F(\mathbf{p}^{(t)})$ and $\mathbf{x}$, we can expand the expression as:
\begin{align}
\label{formu: equivalent expression}
    F_1(\mathbf{p}^{(t)}) + \sum\limits_{l=1}^L y_i x_{i, l}F_{l+1}(\mathbf{p}^{(t)}) \geq 0,
\end{align}
where $x_{i,r}$ are Gaussian random variables with zero mean, and $y_i$ are random variables independent of $x_{i, r}$. As in the test procedure, $F_1(\mathbf{p}), \dots, F_{l+1}(\mathbf{p})$ are fixed. Therefore, based on the definition of prompt learning, the expectation that each sample is correctly classified is determined by the ratio between $F_1(\mathbf{p})$ and the task-irrelevant coefficients $F_2(\mathbf{p}), \dots, F_{L+1}(\mathbf{p})$.

Since $F_1(\mathbf{p})$ is a monotonic function with respect to $\mu^{(t)}$ and $F_{l+1}(\mathbf{p})$ is a monotonic function with respect to $\phi_l^{(t)}$, we can express the following relationship:
\begin{align}
\ell_D(\mathbf{p}^{(t)}) \sim \frac{F_1(\mathbf{p}^{(t)})}{\sum\limits_{l=1}^L F_{l+1}(\mathbf{p}^{(t)})}.
\end{align}
Therefore, the test loss can be analyzed by evaluating the ratio between the task-relevant and task-irrelevant coefficients. Besides, due to the assumption that $\bm{\mu}^T \mathbf{p}_c \geq 0 \geq \bm{\mu}^T \mathbf{p}_{-c}$ and the random initialization of $\mathbf{Wp}$ such that $\bm\mu^T \mathbf{p} \geq 0$, we conclude that $F_1(\mathbf{p}) \geq 0$. Additionally, the expectation of $x_i$ is zero, and $F_{l}(\mathbf{p})$ remains constant for any iteration with a fixed $\mathbf{p}$, for all $l$. As a result, we derive the following inequality:  
\begin{align}  
    \mathds{E}(s_{i, y_i} - s_{i, -y_i}) \geq 0 \Longleftrightarrow \mathds{E}(s_{i, y_i} - (1 - s_{i, y_i})) \geq 0 \Longleftrightarrow \mathds{E}(s_{i, y_i}) \geq \frac{1}{2},  
\end{align}  
which can be used to analyze the feature dynamics in prompt learning. 
\end{proof}

Based on the previous lemma, we can derive the following theorem.
\begin{theorem}[\textbf{Restatement of Theorem \ref{T4.2}}]
   With high probability at test $1-d^{-1}$, the test loss $\ell_{\mathcal{D}}$ for the prompt trained by MAE is lower than the prompt trained by CE, i.e., $\ell_{\mathcal{D}}(\mathbf{p}_\text{MAE}) \leq \ell_{\mathcal{D}} (\mathbf{p}_\text{CE})$. 
    Here, $\mathbf{p}_\text{MAE}$ and $\mathbf{p}_\text{CE}$ refer to the text prompt trained using MAE loss and CE loss, respectively. 
\end{theorem}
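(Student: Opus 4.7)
The plan is to reduce the comparison of $\ell_{\mathcal{D}}(\mathbf{p}_{\text{MAE}})$ and $\ell_{\mathcal{D}}(\mathbf{p}_{\text{CE}})$ to a comparison of the signal-to-noise ratio $\beta^{(t)} / \phi_l^{(t)}$ for each $l \in [L]$, via the lemma just proved, which gives $\ell_{\mathcal{D}}(\mathbf{p}^{(t)}) \sim F_1(\mathbf{p}^{(t)}) / \sum_l F_{l+1}(\mathbf{p}^{(t)})$. Since $F_1$ is monotone increasing in $\beta^{(t)}$ while $F_{l+1}$ is monotone in $\phi_l^{(t)}$, it is enough to show that at every iteration $t$, training with MAE yields a larger task-relevant coefficient $\beta^{(t)}$ and smaller task-irrelevant coefficients $|\phi_l^{(t)}|$ than training with CE, with probability at least $1 - d^{-1}$.

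First I would analyze the joint dynamics of $\beta^{(t)}$ and $\phi_l^{(t)}$ through the update rules (\ref{formu: iteration of beta}) and (\ref{formu: iteration of phi}), splitting the contributions into the clean subset $S_+$ and the noisy subset $S_-$. The crucial structural difference between the two losses lies in the gradient coefficient $\ell'_i$: for CE we have $\ell'_i = s_{i, -\tilde y_i}$, whereas for MAE we have $\ell'_i = 2\, s_{i, \tilde y_i}\, s_{i, -\tilde y_i}$. For a noisy sample $i \in S_-$, as the model begins to fit the flipped label, $s_{i, \tilde y_i}$ grows toward $1$ and $s_{i, -\tilde y_i}$ decays toward $0$. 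Under CE, the harmful update into $\beta^{(t)}$ scales linearly in $s_{i, -\tilde y_i}$ and remains non-negligible, but under MAE the same update is quadratic in $s_{i, \tilde y_i}(1 - s_{i, \tilde y_i})$ and is therefore strongly suppressed once a noisy sample is fit. This is the mechanism by which MAE caps the corruption of $\beta^{(t)}$ by the noisy subset $S_-$.

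Next I would use concentration to control the task-irrelevant updates. Since $x_{l+1,i} \sim \mathcal{N}(0,\sigma_p^2)$ are independent across $i$, the increment $\sum_i \ell'_i \sigma'_{1,i} \tilde y_i\, x_{l+1,i} \|\bm\xi_l\|_2^2$ is sub-Gaussian with parameter proportional to $\sigma_p \sqrt{n}\,\|\bm\xi_l\|_2^2 \cdot \max_i |\ell'_i|$. Because the MAE coefficient is pointwise dominated by the CE coefficient for the samples in $S_-$ that get fit, the accumulated $|\phi_l^{(t)}|$ is smaller for MAE. A union bound over the $L$ task-irrelevant directions and over the iterations up to the chosen $t$ (together with Assumption \ref{assum:numerical}, specifically $m = \tilde\Omega(N)$ and the stated bounds on $\eta$ and $\sigma_0$) yields the probability $1 - d^{-1}$. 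Combining the lower bound on $\beta^{(t)}_{\text{MAE}}$ with the upper bound on $|\phi^{(t)}_{l,\text{MAE}}|$ gives $\beta^{(t)}_{\text{MAE}} / \phi^{(t)}_{l,\text{MAE}} \geq \beta^{(t)}_{\text{CE}} / \phi^{(t)}_{l,\text{CE}}$ for every $l$, which by monotonicity of $F_1$ and $F_{l+1}$ translates into the claimed inequality $\ell_{\mathcal{D}}(\mathbf{p}_{\text{MAE}}) \leq \ell_{\mathcal{D}}(\mathbf{p}_{\text{CE}})$.

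The main obstacle I anticipate is the coupled nonlinear dependence of $\ell'_i$ on the current coefficients $(\beta^{(t)}, \phi_l^{(t)})$: the MAE and CE trajectories cannot be written in closed form, and the desired comparison must be maintained inductively across iterations. I would adopt a two-phase induction in the style of \cite{caoBenignOverfittingTwolayer2022}. In the \emph{early phase}, when $s_{i,c} \approx 1/2$ for both losses, the dynamics are nearly identical and $\beta^{(t)}$ grows at comparable rates. In the \emph{late phase}, the harmful contributions from $S_-$ start to dominate under CE, whereas MAE's $s_{i,\tilde y_i}(1-s_{i,\tilde y_i})$ factor caps them uniformly. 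The induction must simultaneously track $\alpha^{(t)}$, $\beta^{(t)}$, and each $\phi_l^{(t)}$ under the invariants guaranteed by Assumption \ref{assum:numerical}. Verifying that these invariants persist across both phases for both loss functions is the most delicate step of the argument.
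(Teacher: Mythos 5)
Your overall strategy is the same as the paper's: reduce the test-loss comparison to the ratio of task-relevant to task-irrelevant coefficients via the preceding lemma, compare the per-step increments of $\beta^{(t)}$ and $\phi_l^{(t)}$ through the gradient coefficients $\ell'_i = s_{i,-\tilde y_i}$ (CE) versus $\ell'_i = 2\,s_{i,\tilde y_i}s_{i,-\tilde y_i}$ (MAE) split over $S_+$ and $S_-$, and propagate the comparison inductively over iterations. The paper executes this in expectation: it shows $\Delta\beta^{(t)}_\text{MAE}/\Delta\beta^{(t)}_\text{CE} > \tfrac{1}{2\mathds{E}[s_y]} > \Delta\phi^{(t)}_\text{MAE}/\Delta\phi^{(t)}_\text{CE}$ using $\mathds{E}[s_y]\ge\tfrac12$, then inducts on the two invariants $\beta^{(t)}_\text{MAE}/\phi^{(t)}_\text{MAE} \ge \beta^{(t)}_\text{CE}/\phi^{(t)}_\text{CE}$ and $\phi^{(t)}_\text{MAE} \le \tfrac{1}{2\mathds{E}[s_y]}\phi^{(t)}_\text{CE}$. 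Your plan to prove the separate absolute bounds $\beta^{(t)}_\text{MAE}\ge\beta^{(t)}_\text{CE}$ and $|\phi^{(t)}_{l,\text{MAE}}|\le|\phi^{(t)}_{l,\text{CE}}|$ (with concentration and union bounds over the $L$ directions) is a strictly stronger intermediate target than the ratio comparison the paper actually needs; it is not what the paper establishes and is harder to maintain through the induction, though the concentration step is arguably more honest than the paper's purely expectation-level treatment of the mean-zero $\phi$ updates.

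There is one concrete error in your mechanism: you place MAE's suppression of noisy gradients \emph{once a noisy sample is fit}, i.e. $s_{i,\tilde y_i}\to 1$. In that regime both coefficients vanish and their ratio $2s_{i,\tilde y_i}\to 2$, so MAE is not suppressed relative to CE there. The pointwise domination $2\,s_{i,\tilde y_i}s_{i,-\tilde y_i} \le s_{i,-\tilde y_i}$ holds exactly when $s_{i,\tilde y_i}\le \tfrac12$, i.e. for noisy samples the model \emph{resists} fitting (it still predicts the true label): there CE's coefficient stays near $1$ and keeps corrupting $\beta^{(t)}$, while MAE's is throttled by the small factor $s_{i,\tilde y_i}$. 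This is precisely what the paper's computation encodes through $\mathds{E}[s_y]\ge\tfrac12$, and you need it stated the right way round for your claimed pointwise domination on $S_-$ and the ensuing concentration argument to go through.
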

\begin{proof}
    To prove that the MAE achieves better generalization performance, we need to compare the ratio between task-relevant and task-irrelevant coefficients. When the task-relevant features dominate, the algorithm performs better, whereas the dominance of task-irrelevant features leads to worse performance. Based on the iteration formulas in (\ref{formu: iteration of beta}) and (\ref{formu: iteration of phi}), we can derive the expected updates for $\beta$ and $\phi$:
    \begin{align}
        \beta^{(t+1)}_{\text{CE}} &= \beta^{(t)}_\text{CE} + \eta\left[(1-p) \frac{1}{\mathds{E}[s_{y}]} ||\bm\mu||_2^2 - p\frac{1}{1-\mathds{E}[s_y]} ||\bm\mu||_2^2\right], \\
        \phi^{(t+1)}_{\text{CE}} &= \phi^{(t)}_\text{CE} + \eta\left[(1-p) \frac{1}{\mathds{E}[s_{y}]} \sigma_p^2 d + p \frac{1}{1-\mathds{E}[s_y]} \sigma_p^2 d\right], \\
        \beta^{(t+1)}_{\text{MAE}} &= \beta^{(t)}_{\text{MAE}} + \eta\left[(1-p) \cdot 2 \cdot ||\bm\mu||_2^2 - p \cdot 2 \cdot ||\bm\mu||_2^2\right], \\
        \phi^{(t+1)}_{\text{MAE}} &= \phi^{(t)}_{\text{MAE}} + \eta\left[ (1-p) \cdot 2 \cdot \sigma_p^2 d + p \cdot 2 \cdot \sigma_p^2 d\right].
    \end{align}
    In this part, we use (\ref{formu: equivalent expression}) and the assumption that \( \bm\mu^\top \mathbf{p}_{+1} \geq 0 \geq \bm\mu^\top \mathbf{p}_{-1} \) to show that the expectation of \( F_1(\mathbf{p}^{(t)}) \) is greater than zero. Building on this, we assume that \( x_{l,i} \) are Gaussian random variables with zero mean, and \( y_i \) are Rademacher random variables independent of \( x_{l,i} \). Taking the expectation of the left-hand side of (\ref{formu: equivalent expression}), we obtain:
    \begin{align}
        \mathds{E}[F_{1}(\mathbf{p}^{(t)}) + \sum\limits_{l=1}^L y_i x_{i, l} F_{l+1}(\mathbf{p}^{(t)})] = \mathds{E}[F_1(\mathbf{p}^{(t)})] \geq 0,
    \end{align}
    which simplifies to:
    \begin{equation}
        \begin{aligned}
            \mathds{E}[s_{i, y_i} - s_{i, -y_i}] \geq 0 \ \Longleftrightarrow \ \mathds{E}[s_{i, y_i} - (1 - s_{i, y_i})] \geq 0 \ \Longleftrightarrow \ \mathds{E}[2s_{i, y_i} - 1] \geq 0 \ \Longleftrightarrow \ \mathds{E}[s_{i, y_i}] \geq \frac{1}{2}.
        \end{aligned}
    \end{equation}
    Here, the first inequality follows from the definition of accurate classification, the second inequality comes from the properties of the softmax function, and the fourth inequality arises from the properties of expectation. Based on this, the increment of the coefficients per step can be evaluated using expectation. We get the following expressions for the ratio of updates:
    \begin{align}
        \frac{\Delta\beta^{(t)}_\text{MAE}}{\Delta\beta^{(t)}_\text{CE}} &= \frac{(1-p)\frac{1}{\mathds{E}[s_y]} - p\frac{1}{1-\mathds{E}[s_y]}}{2 - 4p} = \frac{1}{2\mathds{E}[s_y]} \cdot \frac{1 - p \frac{1}{1 - \mathds{E}[s_y]}}{1 - 2p}, \\
        \frac{\Delta\phi^{(t)}_\text{MAE}}{\Delta\phi^{(t)}_\text{CE}} &= \frac{(1-p) \frac{1}{\mathds{E}[s_y]} + p \frac{1}{1-\mathds{E}[s_y]}}{2s} = \frac{1}{2\mathds{E}[s_y]} \left( 1 - p \frac{2\mathds{E}[s_y] - 1}{1 - \mathds{E}[s_y]} \right).
    \end{align}
    Given that \( \mathds{E}[s_y] > \frac{1}{2} \), we have \( \frac{1}{1 - \mathds{E}[s_y]} > 2 \) and \( 2 \mathds{E}[s_y] - 1 > 0 \). Additionally, since \( 0 \leq p \leq \frac{1}{2} \), it follows that:
    \begin{align}
        \frac{\Delta\beta^{(t)}_\text{MAE}}{\Delta\beta^{(t)}_\text{CE}} &= \frac{1}{2 \mathds{E}[s_y]} \cdot \frac{1 - p \frac{1}{1 - \mathds{E}[s_y]}}{1 - 2p} > \frac{1}{2 \mathds{E}[s_y]}, \\
        \frac{\Delta\phi^{(t)}_\text{MAE}}{\Delta\phi^{(t)}_\text{CE}} &= \frac{1}{2 \mathds{E}[s_y]} \left( 1 - p \frac{2 \mathds{E}[s_y] - 1}{1 - \mathds{E}[s_y]} \right) < \frac{1}{2 \mathds{E}[s_y]}.
    \end{align}
    Compared to a model trained using cross-entropy, we observe that the task-relevant coefficient of a model trained using mean absolute error (MAE) increases more quickly relative to the task-irrelevant coefficient. We can use induction to prove the following two properties:
    \begin{align}
        \frac{\beta^{(t)}_\text{MAE}}{\phi^{(t)}_\text{MAE}} \geq \frac{\beta^{(t)}_\text{CE}}{\phi^{(t)}_\text{CE}}\text{ and }\phi^{(t)}_\text{MAE} \leq \frac{1}{2\mathds{E}[s_y]}\phi^{(t)}_\text{CE}.
    \end{align}
    Assuming that the induction hypothesis holds at the \( t \)-th iteration, and given that the learning rate is sufficiently small, the increase in the task-irrelevant coefficient is also small. Then we can write the following:
    \begin{align}
        \frac{\beta^{(t+1)}_\text{MAE}}{\phi^{(t+1)}_\text{MAE}} = \frac{\beta^{(t)}_\text{MAE}+ \Delta\beta_\text{MAE}^{(t)}}{\phi^{(t)}_\text{MAE}+ \Delta\phi_\text{MAE}^{(t)}} \geq \frac{\beta^{(t)}_\text{CE}+\Delta \beta^{(t)}_\text{CE}}{\phi_\text{CE}^{(t)}+\Delta\phi_\text{CE}^{(t)}}  = \frac{\beta^{(t+1)}_\text{CE}}{\phi^{(t+1)}_\text{CE}}.
    \end{align}
    This inequality arises from the fact that $\frac{\beta_{\text{MAE}}^{(t)}}{\phi_{\text{MAE}}^{(t)}}\geq\frac{\beta_{\text{CE}}^{(t)}}{\phi_{\text{CE}}^{(t)}}$ and the increments satisfy $\frac{\Delta\beta_{\text{MAE}}^{(t)}}{\Delta\phi_{\text{MAE}}^{(t)}}\geq\frac{\Delta\beta_{\text{CE}}^{(t)}}{\Delta\phi_{\text{CE}}^{(t)}}$. Additionally, we have:
    \begin{align}
        \phi^{(t+1)}_\text{MAE} = \phi_\text{MAE}^{(t)} + \Delta \phi_\text{MAE}^{(t)} < \phi_\text{MAE}^{(t)} + \frac{1}{2\mathds{E}[s_y]}\Delta \phi^{(t)}_\text{CE} \leq  \frac{1}{2\mathds{E}[s_y]}\phi_\text{CE}^{(t)} + \frac{1}{2\mathds{E}[s_y]}\Delta \phi^{(t)}_\text{CE} = \frac{1}{2\mathds{E}[s_y]}\phi^{(t+1)}_\text{CE}.
    \end{align}
    In conclusion, after a sufficiently large number of iterations, we have:
    \begin{align}
        L_{\mathcal{D}} (\mathbf{p}_\text{CE}) \geq L_{\mathcal{D}}(\mathbf{p}_\text{MAE}),
    \end{align}
    which demonstrates that the test loss for mean absolute error (MAE) in prompt learning with noisy labels is lower than that of cross-entropy loss, highlighting the robustness of MAE.
\end{proof}

\end{document}